\def\eqref#1{equation~\ref{#1}}
\def\1{\bm{1}}
\DeclareMathAlphabet{\mathsfit}{\encodingdefault}{\sfdefault}{m}{sl}
\SetMathAlphabet{\mathsfit}{bold}{\encodingdefault}{\sfdefault}{bx}{n}
\newcommand{\E}{\mathbb{E}}
\newcolumntype{Y}{>{\centering\arraybackslash}X}
\newcolumntype{Z}{>{\raggedleft\arraybackslash}X}
\definecolor{salmon}{RGB}{234,153,153}
\definecolor{cornflowerblue}{RGB}{100,149,237}
\definecolor{darkgreen}{rgb}{0.0, 0.5, 0.0}
\definecolor{darkblue}{rgb}{0, 0, 0.5}
\theoremstyle{plain}
\newtheorem{theorem}{Theorem}[section]
\newtheorem{lemma}[theorem]{Lemma}
\theoremstyle{definition}
\theoremstyle{remark}
\newcommand{\midsepremove}{\aboverulesep = 0mm \belowrulesep = 0mm}
\definecolor{redborder}{HTML}{A74B2A}
\newcommand{\imagepath}{imgs/observations/}
\newcommand{\advcircle}{_circle_thick}
\title{Pruning Cannot Hurt Robustness: \\Certified Trade-offs in Reinforcement Learning}
\date{}
\author{%
James Pedley\thanks{Correspondence to jpedley@robots.ox.ac.uk} \quad Benjamin Etheridge \quad Stephen J. Roberts \quad Francesco Quinzan 
}
\affil{Machine Learning Research Group\\
Department of Engineering Science\\University of Oxford}
\begin{document}

\maketitle


\begin{abstract}
\noindent
Reinforcement learning (RL) policies deployed in real-world environments must remain reliable under adversarial perturbations. At the same time, modern deep RL agents are heavily overparameterized, raising costs and fragility concerns. While pruning has been shown to improve robustness in supervised learning, its role in adversarial RL remains poorly understood. We develop the first theoretical framework for \emph{certified robustness under pruning} in state-adversarial Markov decision processes (SA-MDPs). For Gaussian and categorical policies with Lipschitz networks, we prove that elementwise pruning can only tighten certified robustness bounds; pruning never makes the policy less robust. Building on this, we derive a novel three-term regret decomposition that disentangles clean-task performance, pruning-induced performance loss, and robustness gains, exposing a fundamental performance--robustness frontier. Empirically, we evaluate magnitude and micro-pruning schedules on continuous-control benchmarks with strong policy-aware adversaries. Across tasks, pruning consistently uncovers reproducible ``sweet spots'' at moderate sparsity levels, where robustness improves substantially without harming---and sometimes even enhancing---clean performance. These results position pruning not merely as a compression tool but as a structural intervention for robust RL.
\end{abstract}

%


\section{Introduction}
\label{sec:intro}

Reinforcement learning (RL) has demonstrated impressive capabilities in domains ranging from
strategic games \citep{DBLP:journals/nature/SilverSSAHGHBLB17} to robotic control
\citep{DBLP:journals/corr/LillicrapHPHETS15}. RL is now employed in various safety-critical
applications, such as for autonomous vehicles \citep{DBLP:conf/icra/KendallHJMRALBS19},
computer network defence \citep{DBLP:journals/corr/abs-2306-09318}, and language model
alignment \citep{DBLP:conf/nips/Ouyang0JAWMZASR22}, often without human-in-the-loop
supervision. It is therefore of crucial importance to consider how robust RL policies are against
malicious actors who would seek to adversarially manipulate their actions, and how we might better
defend against such attacks.

Modern model-free RL policies are typically over-parameterized
\citep{DBLP:conf/icml/SokarACE23, DBLP:conf/nips/Thomas22}, which makes them more expensive
to deploy and fragile to distribution shift \citep{DBLP:conf/iclr/KumarA0CTL22,
DBLP:conf/iclr/MenonRK21}. A natural solution is pruning, which has been widely explored in
supervised learning for model compression \citep{DBLP:conf/iclr/HayouTDT21}, improved
generalization \citep{DBLP:conf/nips/CunDS89}, and robustness to adversarial attacks
\citep{DBLP:conf/nips/Sehwag0MJ20, DBLP:journals/tmlr/LiCLLW23}. However, unlike in supervised learning, the relationship between pruning and robustness in RL remains largely unexplored \citep{DBLP:conf/icml/GraesserEEC22, DBLP:conf/nips/0001CX0LBH20}. Reinforcement learning poses unique challenges: perturbations to observations can propagate and accumulate over long-horizon trajectories, where even small errors may compound into catastrophic failures \citep{DBLP:conf/iclr/WengDUXGSK20}.

In this work, we study \emph{sparse RL robustness} to examine how pruning influences both benign performance and adversarial robustness, and how these often competing objectives can be better aligned. We model adversaries which perturb agent observations through a state adversarial Markov
decision process (Figure~\ref{fig:threat}), building off work from \citet{DBLP:conf/nips/0001CX0LBH20}.
This can be used to show that pruning offers theoretical guarantees, proving that
\textbf{element-wise pruning cannot worsen the bounds of certified robustness}. We derive a
three-term regret decomposition that disentangles clean performance, pruning-induced performance
loss, and robustness gain.

\begin{wrapfigure}{R}{0.5\textwidth}
  \vspace{-2em}
  \centering
  \includegraphics[width=0.98\linewidth]{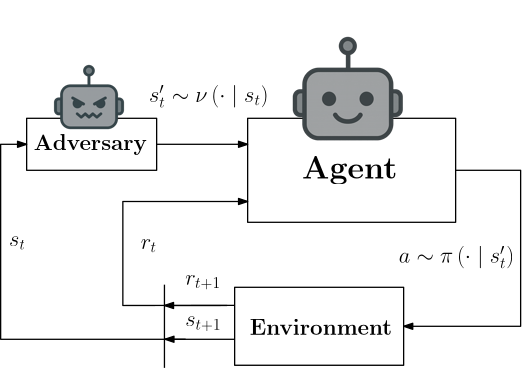}
  \caption{The SA-MDP framework. A victim agent receives a perturbed observation from an adversary trained to reduce its performance. The true state $s_t$ is emitted by the environment, tampered by the adversary, and then passed to the victim.}
  \label{fig:threat}
\end{wrapfigure}
We validate these predictions experimentally using Proximal Policy Optimisation (PPO)
\citep{DBLP:journals/corr/SchulmanWDRK17} across multiple continuous control environments under
a range of strong policy-aware adversaries. Across settings, pruning consistently uncovers a
``sweet spot'' at intermediate sparsities where robustness improves substantially without sacrificing
--- and sometimes even enhancing --- clean performance. We combine pruning with
state-adversarial regularisation \citep{DBLP:conf/nips/0001CX0LBH20} to highlight its effectiveness
as a complementary technique to existing robustness measures. Across three MuJoCo benchmarks,
pruning achieves up to $25\%$ higher certified robustness while maintaining at least $95\%$ of
baseline clean performance, consistently revealing reproducible Pareto optima.

\noindent\textbf{Our contributions.}\\
\begin{itemize}
\item 
\noindent \textit{Theoretical guarantees:} We prove that pruning monotonically improves certified robustness in SA-MDPs, establishing that sparsity cannot reduce adversarial resilience.  
\item
\noindent \textit{Trade-off characterization:} We derive a three-term regret bound that formalizes the interplay between pruning, clean performance, and robustness, clarifying when these align or conflict.  
\item
\noindent \textit{Empirical validation:} We show, across continuous-control benchmarks, that pruning consistently uncovers reproducible ``sweet spots'' where robustness gains outweigh performance losses.  
\end{itemize}

\section{Related work}
\label{sec:lit}

\subsection{Adversarial attacks and robustness in RL}
\label{sec:lit:adv_rl}
%
\textbf{Training-phase attacks.}\label{sec:lit:adv_rl:training} A first class of training-phase attacks are reward attacks. As rewards formally characterize an agent’s purpose, altering the rewards logically changes the learned policies of the agents. A reward-poisoning attack was proposed in batch RL \citep{DBLP:conf/aaai/ZhangP08, DBLP:conf/sigecom/ZhangPC09}, where rewards were stored in an unlocked, pre-collected dataset. This provided the attacker with the opportunity to directly change the reward in the dataset. Variants of this attack have also been proposed \citep{DBLP:conf/gamesec/HuangZ19a,DBLP:conf/icml/RakhshaRD0S20,DBLP:journals/corr/abs-2102-08492,DBLP:journals/access/CaiZH22}. These variants use different oracle access to the model being attacked.
It is further possible to attack an RL agent, without tampering the reward. For example, \cite{DBLP:conf/atal/XuWRR21,DBLP:conf/atal/Xu22} propose an environment-poisoning attack, where the victim RL agent is misled by subtle changes to the environment.
RL agents can also be attacked by embedding triggers that elicit malicious behaviour \citep{DBLP:conf/dac/KiourtiWJL20,DBLP:conf/globecom/Yu0LHF22}. Here, the attacker alters the training process so that the agent learns to associate a rare pattern (the trigger) with an attacker-chosen behaviour.
We remark that training-phase attacks have also been studied for RL from Human Feedback (RLHF), mostly in the context of LLMs fine-tuning, e.g., \citep{DBLP:conf/acl/Wang0CVX24,DBLP:journals/corr/abs-2409-00787,DBLP:journals/corr/abs-2304-12298,DBLP:conf/iclr/RandoT24,DBLP:journals/corr/abs-2406-06852}.

\textbf{Test-phase attacks.}\label{sec:lit:adv_rl:test} Test-phase attacks aim to deceive a trained policy. One common approach involves introducing perturbations into the state space at different points during execution \citep{DBLP:conf/iclr/HuangPGDA17,DBLP:conf/iclr/LinHLS0S17,DBLP:conf/iclr/KosS17}. Beyond this, carefully crafted perturbation sequences can be designed to steer agents toward specific states \cite{DBLP:conf/mldm/BehzadanM17,DBLP:conf/iclr/LinHLS0S17,DBLP:journals/corr/abs-1805-12487,DBLP:conf/iclr/WengDUXGSK20,DBLP:conf/atal/HussenotGP20,DBLP:journals/tdsc/MoTLY23}. Such perturbation-based attacks, however, can be mitigated using techniques that reinforce cumulative rewards \cite{DBLP:conf/ccs/ChanWY20}. In addition, research has explored test-phase transferability attacks \cite{DBLP:conf/iclr/HuangPGDA17,DBLP:conf/icassp/YangQCOHLM20,DBLP:conf/atal/InkawhichCL20}, which exploit the empirical observation that adversarial examples crafted to deceive one model (a surrogate) can also mislead other models, even when those models differ in architecture, training data, or parameters. More recently, test-phase attacks have also been devised to specifically target RLHF in LLMs \citep{DBLP:journals/corr/abs-2309-11166, DBLP:journals/chinaf/XiCGHDHZWJZZFWXZWJZLYDW25, DBLP:conf/mm/LiuC0Y024, DBLP:journals/corr/abs-2306-04528}. For further information we refer interested readers to surveys such as \cite{DBLP:journals/csur/DasAW25,DBLP:journals/corr/abs-2312-02003,DBLP:journals/corr/abs-2310-10844}.

\textbf{Robustness in RL.}\label{sec:lit:adv_rl:robustness} While robustness in RL has been extensively explored, studies specifically addressing adversarial robustness remain limited. Empirical robust learning typically uses heuristics or evaluations to enhance model reliability. An effective way of improving robustness is to use Adversarially Robust Policy Learning (ARPL), which incorporates physically plausible adversarial examples during training \citep{DBLP:conf/iros/MandlekarZGFS17,DBLP:conf/icml/TesslerEM19,DBLP:journals/tnn/ZhouLZ24}. It is likewise possible to make agents more resilient, by altering the environment during training \cite{DBLP:conf/nips/JiangDPFGR21,DBLP:conf/icml/Parker-HolderJ022,DBLP:conf/nips/0001JVBRCL20}.
Additional contributions include \cite{DBLP:conf/icml/BallLPR21}, showing that Augmented World Models improve generalization and \cite{DBLP:conf/icml/BallPPCR20}, where agents use a context variable to adapt to changes in environment dynamics. 
\subsection{Pruning}
\label{sec:lit:pruning} 
Sparsity is valuable not only for model compression and faster inference \citep{DBLP:journals/corr/HanPTD15,DBLP:conf/iclr/MolchanovTKAK17}, but also for improving generalisation \citep{DBLP:conf/nips/CunDS89, DBLP:conf/nips/HassibiS92,DBLP:conf/nips/BartoldsonMBE20}. Pruning design choices include whether to remove individual parameters \citep{DBLP:conf/nips/CunDS89, DBLP:journals/corr/HanPTD15} or use structured sparsity \citep{DBLP:conf/nips/WenWWCL16, DBLP:conf/iclr/LasbyGENI24}, and whether to prune statically \citep{DBLP:conf/iclr/FrankleC19} or dynamically during training \citep{DBLP:conf/icml/EvciGMCE20, mocanu_scalable_2018, DBLP:journals/ijon/Prechelt97}. Criteria include random selection \citep{DBLP:conf/iclr/LiuCC0MWP22}, magnitude \citep{DBLP:conf/nips/CunDS89}, saliency \citep{DBLP:conf/nips/HassibiS92}, or evolutionary strategies \citep{mocanu_scalable_2018}, often paired with Straight-Through Estimators \citep{DBLP:conf/wacv/VanderschuerenV23, DBLP:journals/corr/BengioLC13, Hinton2012Coursera} for gradient flow through binary masks. For a detailed overview, see, e.g., \cite{DBLP:journals/pami/ChengZS24a}.

For supervised learning, it has been empirically demonstrated that pruning can improve robustness against adversarial attacks, both through the above methods and augmenting with additional adversarial objectives. In \cite{DBLP:journals/corr/abs-1912-02386} lottery tickets \citep{DBLP:conf/iclr/FrankleC19}, pruned up to $\sim96\%$, can outperform the original network on adversarial accuracy. Work in \cite{DBLP:conf/nips/FuYZWOCL21} extends \cite{DBLP:conf/iclr/FrankleC19, DBLP:conf/icml/MalachYSS20} to show that tickets exist which can outperform the dense network on adversarial examples, without any training. HYRDA \citep{DBLP:conf/nips/Sehwag0MJ20} creates a risk minimisation objective for pruning which optimises the pruning to be adversarially robust. Conversely, \cite{DBLP:journals/corr/abs-1912-02386} separately applies pruning followed by adversarial training \citep{DBLP:conf/iclr/MadryMSTV18} to produce more robust sparse networks and \cite{DBLP:conf/iclr/BairYS0024} introduces a sharpness-aware pruning criterion to encourage flatter, more generalisable networks. In contrast, far less work has been done to understand the interaction of sparsity and robustness for RL policies, motivating this work.

\section{Robustness Bounds in SA-MDPs} \label{sec:theory}
%
\subsection{Setting}
We study a state-adversarial Markov decision process (SA-MDP) defined with perturbation sets $B(s)\subseteq\mathcal{S}$. A standard MDP is specified as a tuple $(\mathcal{S},\mathcal{A},R,p,\gamma)$, where a stationary stochastic policy is given by $\pi_\theta:\mathcal{S}\to\mathcal{P}(\mathcal{A})$ with density $\pi_\theta(a|s)$. In the SA-MDP setting, the agent does not act on the true state $s$ but instead observes an adversarially perturbed state $\nu(s)\in B(s)$ and selects actions according to $\pi_\theta(\cdot|\nu(s))$, while the environment transitions based on the true state through $p(\cdot|s,a)$. Consequently, an SA-MDP can be represented as $(\mathcal{S},\mathcal{A},B,R,p,\gamma)$. in this work, we constrain $\nu$ to an $\ell_p$ ball: $B(s)\!:=\!\{\hat s\in\mathcal{S}:\|\hat s-s\|_p\le\varepsilon\}$ with budget $\varepsilon\!>\!0$ and $p\!\in\!\{2,\infty\}$.

For distributions $P,Q$ on $\mathcal{A}$, we define the total variation distance as 
$$D_{\mathrm{TV}}(P,Q):=\sup_{E\subseteq\mathcal{A}}|P(E)-Q(E)|.$$
For each state $s\in \mathcal S$, we define 
$TV_{\max}(s;\theta):=\max_{\hat s\in B(s)}D_{\mathrm{TV}}(\pi_\theta(\cdot|s),\pi_\theta(\cdot|\hat s))$. 
Let $d_\mu^{\pi_\theta}$ be the discounted visitation distribution from $\mu$, and set
\[
F(\theta):=\mathbb{E}_{s\sim d_\mu^{\pi_\theta}}[TV_{\max}(s;\theta)],\qquad
\mathcal B(\theta):=\alpha\,F(\theta)\ \ \text{with} \ \
\alpha=2\!\left[1+\tfrac{\gamma}{(1-\gamma)^2}\right]R_{\max},
\]
with $|R(s,a,s')|\le R_{\max}$.

\textbf{Policy classes and constants.} 
We consider (i) Gaussian policies $\pi_\theta(a|s)=\mathcal N(\mu_\theta(s),\Sigma)$ with fixed $\Sigma\succ0$, and 
(ii) categorical policies $\pi_\theta(\cdot|s)=\mathrm{softmax}(z_\theta(s))$, $z_\theta(s)\in\mathbb{R}^K$. 
In subsequent bounds we use the constant 
$c=(\sqrt{2\pi\,\lambda_{\min}(\Sigma)}) ^{-1}$ for Gaussian policies and 
$c=1/4$ for categorical softmax policies. We write $\tilde V^{\pi_\theta\circ\nu^*}$ for the robust value under the optimal adversary. 
The robustness gap for a state $s\in\mathcal S$ is $V^{\pi_{\theta'}}(s)-\tilde V^{\pi_{\theta'}\circ\nu^*(\pi_{\theta'})}(s)$.

\textbf{Additional notation.} We use $\|x\|_p$ for vector $\ell_p$ norms; $\|W\|_F$ (Frobenius), $\|W\|_1\!=\!\max_j\sum_i|W_{ij}|$, $\|W\|_\infty\!=\!\max_i\sum_j|W_{ij}|$ for matrices; and $\|J\|_{\mathrm{op}}$ for spectral norm. For neural policies, $g_\theta(s)$ denotes logits/means and $J_{g_\theta}(s)$ its Jacobian.
%

\subsection{Performance–robustness trade-offs}
We show that elementwise pruning of a policy network in stochastic action MDPs cannot worsen its certified robustness guarantee. This result follows from a surrogate Lipschitz bound, which decreases under pruning, thereby ensuring monotone improvement in robustness.
\begin{theorem}[SA-MDP robustness improves under pruning]
\label{thm:prune-certified-robust}
Let $\pi_\theta$ be either a Gaussian or categorical policy realized by a feedforward network with Lipschitz activations $\sigma_\ell$ and weights $\theta$.  
Define the surrogate Lipschitz bound
\[
\tilde L_\theta
:=\Bigg(\prod_{\ell=1}^{L-1}L_{\sigma_\ell}\Bigg)
  \prod_{\ell=1}^L \min\!\Big\{\,\|W_\ell\|_F,\;\sqrt{\|W_\ell\|_1\|W_\ell\|_\infty}\,\Big\}.
\]
Let $\theta'$ be obtained from $\theta$ by elementwise pruning. Then
\[
\max_s \{V^{\pi_\theta}(s)-\tilde V^{\pi_\theta\circ\nu^*(\pi_\theta)}(s)\}
\;\le\;\alpha\,c\,\tilde L_\theta\,\varepsilon,
\]
and
\[
\max_s \{V^{\pi_{\theta'}}(s)-\tilde V^{\pi_{\theta'}\circ\nu^*(\pi_{\theta'})}(s)\}
\;\le\;\alpha\,c\,\tilde L_{\theta'}\,\varepsilon
\;\le\;\alpha\,c\,\tilde L_\theta\,\varepsilon,
\]

Thus, under pure elementwise pruning, the certified robustness bound is monotone nonincreasing.
\end{theorem}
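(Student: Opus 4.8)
The statement splits into two largely independent pieces: (A) the certified bound $\max_s\{V^{\pi_\theta}(s)-\tilde V^{\pi_\theta\circ\nu^*(\pi_\theta)}(s)\}\le \alpha c\,\tilde L_\theta\,\varepsilon$ for a \emph{fixed} network, and (B) the monotonicity $\tilde L_{\theta'}\le\tilde L_\theta$ under elementwise pruning, after which the displayed chain for $\theta'$ follows by applying (A) to the pruned policy. The plan is to establish (A) by composing three standard estimates — a Lipschitz bound for the feedforward map, a Lipschitz-to-total-variation conversion tailored to each policy family, and the SA-MDP value-gap inequality $\max_s\{V^{\pi_\theta}(s)-\tilde V^{\pi_\theta\circ\nu^*(\pi_\theta)}(s)\}\le\mathcal B(\theta)=\alpha F(\theta)$ underlying the setup — and to establish (B) by an elementary monotonicity argument for each matrix-norm factor.

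For (A): writing $g_\theta$ (means for Gaussian, logits for categorical) as the composition of the affine maps $W_\ell$ and the activations $\sigma_\ell$ and using submultiplicativity of the operator norm under composition (biases, being affine shifts, do not affect Lipschitz constants), $\mathrm{Lip}(g_\theta)\le\big(\prod_{\ell=1}^{L-1}L_{\sigma_\ell}\big)\prod_{\ell=1}^{L}\|W_\ell\|_{\mathrm{op}}$. I then bound each spectral norm by whichever surrogate is smaller, via the two classical inequalities $\|W\|_{\mathrm{op}}\le\|W\|_F$ and $\|W\|_{\mathrm{op}}\le\sqrt{\|W\|_1\|W\|_\infty}$ (the latter by Riesz--Thorin interpolation between the $\ell_1\!\to\!\ell_1$ and $\ell_\infty\!\to\!\ell_\infty$ operator norms, which equal the max absolute column and row sums), obtaining $\mathrm{Lip}(g_\theta)\le\tilde L_\theta$. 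Next I convert an output displacement into a TV distance: for Gaussian policies with fixed $\Sigma$, $D_{\mathrm{TV}}(\mathcal N(\mu_1,\Sigma),\mathcal N(\mu_2,\Sigma))\le c\,\|\mu_1-\mu_2\|_2$ with $c=(\sqrt{2\pi\,\lambda_{\min}(\Sigma)})^{-1}$, and for softmax policies $D_{\mathrm{TV}}=\tfrac12\|\mathrm{softmax}(z_1)-\mathrm{softmax}(z_2)\|_1\le c\,\|z_1-z_2\|$ with $c=\tfrac14$. Composing these, for every $s$ and every $\hat s\in B(s)$ we get $D_{\mathrm{TV}}(\pi_\theta(\cdot|s),\pi_\theta(\cdot|\hat s))\le c\,\tilde L_\theta\,\varepsilon$, hence $TV_{\max}(s;\theta)\le c\,\tilde L_\theta\,\varepsilon$ uniformly in $s$, so $F(\theta)=\mathbb E_{s\sim d_\mu^{\pi_\theta}}[TV_{\max}(s;\theta)]\le c\,\tilde L_\theta\,\varepsilon$. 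Plugging into the value-gap bound gives the first display.

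For (B): elementwise pruning replaces a subset of entries of each $W_\ell$ by zero, and zeroing entries does not increase $\sum_{ij}W_{ij}^2$, nor any column sum $\sum_i|W_{ij}|$, nor any row sum $\sum_j|W_{ij}|$; hence $\|W_\ell\|_F$, $\|W_\ell\|_1$, $\|W_\ell\|_\infty$ are all nonincreasing, and so is $\sqrt{\|W_\ell\|_1\|W_\ell\|_\infty}$. Since the pointwise minimum of two nonincreasing quantities is nonincreasing, each per-layer factor $\min\{\|W_\ell\|_F,\sqrt{\|W_\ell\|_1\|W_\ell\|_\infty}\}$ is nonincreasing, while the $L_{\sigma_\ell}$ are untouched; therefore $\tilde L_{\theta'}\le\tilde L_\theta$. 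Applying (A) to $\pi_{\theta'}$ (its fixed $\Sigma$, and hence $c$, as well as $\alpha$ and $\varepsilon$, are unchanged) yields $\max_s\{V^{\pi_{\theta'}}(s)-\tilde V^{\pi_{\theta'}\circ\nu^*(\pi_{\theta'})}(s)\}\le\alpha c\,\tilde L_{\theta'}\,\varepsilon\le\alpha c\,\tilde L_\theta\,\varepsilon$, the second display, and monotonicity follows.

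I expect the main obstacle to be bookkeeping rather than depth: matching the matrix norms to the geometry of the perturbation ball (the $\sqrt{\|W\|_1\|W\|_\infty}$ surrogate is the natural one for $p=2$; for $p=\infty$ one must either absorb a dimension factor into $\varepsilon$ or carry $\ell_\infty\!\to\!\ell_2$ operator norms throughout) and pinning down the exact TV constants for the two policy families. One conceptual point to state carefully: pruning changes both the visitation distribution $d_\mu^{\pi_{\theta'}}$ and the optimal adversary $\nu^*(\pi_{\theta'})$, but this is harmless because $TV_{\max}(s;\theta)\le c\,\tilde L_\theta\,\varepsilon$ holds uniformly over states, so it survives the expectation under any visitation distribution and the maximum over any admissible adversary.
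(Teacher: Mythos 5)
Your proposal is correct and follows essentially the same route as the paper's proof: the layerwise Lipschitz bound with the entrywise-monotone surrogates $\|W_\ell\|_F$ and $\sqrt{\|W_\ell\|_1\|W_\ell\|_\infty}$, the same TV-conversion constants $c$ for Gaussian and softmax policies, the SA-MDP value-gap inequality, and monotonicity of each per-layer factor under pruning. One small correction: for the $\max_s$ statement you should invoke the max-over-states form of the value-gap bound, $\max_s\{V^{\pi}(s)-\tilde V^{\pi\circ\nu^*(\pi)}(s)\}\le\alpha\,\max_s\max_{\hat s\in B(s)}D_{\mathrm{TV}}(\pi(\cdot|s),\pi(\cdot|\hat s))$, rather than $\alpha F(\theta)$ (which is an expectation under the visitation distribution and does not bound the max), but since you establish $TV_{\max}(s;\theta)\le c\,\tilde L_\theta\,\varepsilon$ uniformly in $s$, the conclusion is unaffected.
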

Intuitively, this theorem shows that pruning reduces the network’s sensitivity to perturbations, so the certified robustness of the policy can only improve as parameters are removed.

\noindent\textbf{Training remark.}
The monotonicity result (Theorem~1) applies to pruning on a fixed set of weights. 
During training, gradient steps may enlarge weight norms and hence $\tilde L_\theta$, so robustness is not globally monotone. 
Nevertheless, each pruning step strictly decreases $\tilde L_\theta$ relative to the current parameters, acting as a monotone regularizer counteracting weight growth. 
This explains why robustness tends to improve steadily in practice (Sec.~5) when pruning is interleaved with training.

\noindent\textbf{Tightness of the bound.}
While Theorem~\ref{thm:prune-certified-robust} provides a provably monotone
global robustness bound, it can be loose in practice. 
A sharper, distribution--dependent refinement is given in 
Lemma~\ref{lem:local-bound} (Appendix), which often yields much tighter estimates, 
though without the same monotonicity guarantee under pruning.

Theorem~\ref{thm:prune-certified-robust} guarantees pruning cannot worsen the worst-case robustness gap.
However, worst-case bounds can be overly pessimistic. 
To obtain guarantees that better capture typical performance, we next consider expected versions of the robustness gap, aligned with the population-level objective $F(\theta)$.  
This motivates our second main result, which characterizes robustness through $F(\theta)$ and bounds the expected degradation in value under the optimal adversary.

\begin{theorem}[Unified regret under SA attack]
\label{thm:unified-regret}
Fix a start distribution $\mu$. Write $J(\pi):=\mathbb{E}_{s_0\sim\mu}[V^\pi(s_0)]$ and $\tilde J(\pi):=\mathbb{E}_{s_0\sim\mu}[\tilde V^\pi(s_0)]$. Let $\bar\pi$ be any comparator policy (e.g., $\pi^*$ maximizing $J$ or $\tilde\pi^*$ maximizing $\tilde J$), and let $\nu^*$ denote the optimal SA adversary for $\pi_\theta$. Define
\[
\mathrm{Reg}_{\mathrm{clean}}(\theta;\bar\pi):=J(\bar\pi)-J(\pi_\theta),
\qquad
\mathrm{Reg}_{\mathrm{atk}}(\theta;\bar\pi):=J(\bar\pi)-\tilde J(\pi_\theta\!\circ\!\nu^*).
\]
Then, it holds 
$$\mathrm{Reg}_{\mathrm{atk}}(\theta;\bar\pi)-\mathrm{Reg}_{\mathrm{clean}}(\theta;\bar\pi)
=J(\pi_\theta)-\tilde J(\pi_\theta\!\circ\!\nu^*)
\;\le\;\mathcal B(\theta).$$ Additionally, if $\pi_\theta$ is Gaussian with fixed $\Sigma\!\succ\!0$ or categorical softmax with Lipschitz network, then \[
\mathrm{Reg}_{\mathrm{atk}}(\theta;\bar\pi)-\mathrm{Reg}_{\mathrm{clean}}(\theta;\bar\pi)
\;\le\;\alpha\,c\,\tilde L_\theta\,\varepsilon.
\]
Moreover, if $\theta'$ is obtained by entrywise pruning, then
\[
\mathrm{Reg}_{\mathrm{atk}}(\theta';\bar\pi)-\mathrm{Reg}_{\mathrm{clean}}(\theta';\bar\pi)
\;\le\;\alpha\,c\,\tilde L_{\theta'}\,\varepsilon
\;\le\;\alpha\,c\,\tilde L_\theta\,\varepsilon.
\]
\end{theorem}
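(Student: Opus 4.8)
The plan is to establish the three displayed claims in order: the first is an algebraic identity, the second the population-level bound through $\mathcal{B}(\theta)$, and the third a pointwise total-variation estimate for the two policy classes followed by the pruning monotonicity inherited from Theorem~\ref{thm:prune-certified-robust}. For the identity I would simply expand the definitions: $\mathrm{Reg}_{\mathrm{atk}}(\theta;\bar\pi)-\mathrm{Reg}_{\mathrm{clean}}(\theta;\bar\pi) = \bigl(J(\bar\pi)-\tilde J(\pi_\theta\circ\nu^*)\bigr) - \bigl(J(\bar\pi)-J(\pi_\theta)\bigr) = J(\pi_\theta)-\tilde J(\pi_\theta\circ\nu^*)$, so the comparator $\bar\pi$ cancels and the remaining quantity is the intrinsic clean-versus-robust value gap of $\pi_\theta$; in particular the bound does not depend on the choice of $\bar\pi$.

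For the inequality $J(\pi_\theta)-\tilde J(\pi_\theta\circ\nu^*)\le\mathcal{B}(\theta)$ I would invoke the SA-MDP value-gap lemma in the style of \citet{DBLP:conf/nips/0001CX0LBH20}: writing the gap as $\mathbb{E}_{s_0\sim\mu}[V^{\pi_\theta}(s_0)-\tilde V^{\pi_\theta\circ\nu^*}(s_0)]$ and telescoping along trajectories, each step contributes an action-distribution mismatch controlled by $D_{\mathrm{TV}}(\pi_\theta(\cdot\mid s),\pi_\theta(\cdot\mid\nu^*(s)))\le TV_{\max}(s;\theta)$ weighted by the reward range; the geometric sum in $\gamma$ collapses the per-step errors into the factor $2[1+\gamma/(1-\gamma)^2]R_{\max}=\alpha$ together with an expectation over the discounted visitation distribution, yielding $J(\pi_\theta)-\tilde J(\pi_\theta\circ\nu^*)\le\alpha\,\mathbb{E}_{s\sim d_\mu^{\pi_\theta}}[TV_{\max}(s;\theta)]=\alpha F(\theta)=\mathcal{B}(\theta)$.

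The network-specific refinement reduces to the pointwise claim $TV_{\max}(s;\theta)\le c\,\tilde L_\theta\,\varepsilon$; taking $\mathbb{E}_{s\sim d_\mu^{\pi_\theta}}$ and multiplying by $\alpha$ then gives $\mathcal{B}(\theta)\le\alpha c\tilde L_\theta\varepsilon$. For any $\hat s\in B(s)$, composing the per-layer bounds $\|W_\ell\|_{\mathrm{op}}\le\|W_\ell\|_F$ and $\|W_\ell\|_{\mathrm{op}}\le\sqrt{\|W_\ell\|_1\|W_\ell\|_\infty}$ (whence the $\min$) with the Lipschitz activations shows $\|g_\theta(s)-g_\theta(\hat s)\|_2\le\tilde L_\theta\|\hat s-s\|_2\le\tilde L_\theta\varepsilon$. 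For a Gaussian policy with fixed $\Sigma$, $g_\theta=\mu_\theta$ and the equal-covariance Gaussian total-variation estimate bounds $D_{\mathrm{TV}}$ by $c\|\mu_\theta(s)-\mu_\theta(\hat s)\|_2$ with $c=(\sqrt{2\pi\lambda_{\min}(\Sigma)})^{-1}$; for a categorical softmax policy, $D_{\mathrm{TV}}=\tfrac12\|\mathrm{softmax}(z_\theta(s))-\mathrm{softmax}(z_\theta(\hat s))\|_1$ and the softmax-Jacobian bound gives Lipschitz constant $c=1/4$ relative to the logit perturbation. Either way $TV_{\max}(s;\theta)\le c\tilde L_\theta\varepsilon$. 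Running the same chain for $\theta'$ gives the middle inequality $\le\alpha c\tilde L_{\theta'}\varepsilon$, and the last inequality is precisely the monotonicity $\tilde L_{\theta'}\le\tilde L_\theta$ proved in Theorem~\ref{thm:prune-certified-robust} (zeroing entries cannot increase any of $\|W_\ell\|_F$, $\|W_\ell\|_1$, $\|W_\ell\|_\infty$, hence neither their $\min$ nor the product).

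I expect the main obstacle to be the second step: making the SA-MDP telescoping rigorous, in particular passing correctly from the start distribution $\mu$ to the discounted visitation distribution $d_\mu^{\pi_\theta}$ and tracking the $\gamma/(1-\gamma)^2$ term so that the constant matches $\alpha$ exactly. A secondary point needing care is the perturbation norm: the bound as stated uses the $\ell_2$ Lipschitz constant $\tilde L_\theta$, so for $p=\infty$ one should either work with the $\ell_2$ instantiation or absorb a $\sqrt{\dim\mathcal{S}}$ factor, which I would flag explicitly. Everything else --- the identity, the operator-norm composition, and the Gaussian/softmax total-variation bounds --- is routine or quotable.
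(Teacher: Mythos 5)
Your proposal is correct and follows essentially the same route as the paper: the comparator $\bar\pi$ cancels algebraically, the gap $J(\pi_\theta)-\tilde J(\pi_\theta\circ\nu^*)$ is bounded by $\mathcal B(\theta)=\alpha F(\theta)$ via the expected-value form of the SA-MDP bound of \citet{DBLP:conf/nips/0001CX0LBH20} (the paper's Lemma~\ref{thm:expected-tv}), and the refinement $TV_{\max}(s;\theta)\le c\,\tilde L_\theta\,\varepsilon$ plus the monotonicity $\tilde L_{\theta'}\le\tilde L_\theta$ are imported exactly as in Theorem~\ref{thm:prune-certified-robust}. The only difference is one of detail: you sketch the telescoping derivation of $\alpha$ where the paper simply cites the prior result, and your flag about the $\ell_2$ versus $\ell_\infty$ perturbation norm is a legitimate point of care that the paper itself glosses over.
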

Theorem~\ref{thm:unified-regret} shows that the extra regret a policy suffers under the optimal state-adversarial attack (compared to its clean regret) is always bounded by a robustness coefficient $\mathcal B(\theta)$, and in particular by the Lipschitz surrogate $\tilde L_\theta$ for Gaussian or softmax policies. 
In other words, pruning cannot increase this attack–clean regret gap and in fact makes the bound tighter.

\noindent\textbf{Pruning sensitivity.}
For pruned parameters $\theta'=\theta-\Delta\theta$, define the path–averaged parameter sensitivity
\[
\mathcal L_{\mathrm{par}}(\theta,\theta') := \int_0^1 
\Big(\E_{s\sim d_\mu^{\pi_\theta}}\|J_\phi g_\phi(s)\|_{\mathrm{op}}^2\Big)^{1/2}_{\phi=\theta'+t(\theta-\theta')}\mathrm{d}t,
\]
with $g_\phi=\mu_\phi$, for Gaussian policies and
$g_\phi=z_\phi$ for categorical.  
Lemma~\ref{thm:value-drop} (Appendix) shows that both clean and attacked value drops satisfy
\[
J(\pi_\theta)-J(\pi_{\theta'}) ,\;\;
\tilde J(\pi_\theta\!\circ\!\nu^*)-\tilde J(\pi_{\theta'}\!\circ\!\nu^*)
\;\le\; \alpha c\,\mathcal L_{\mathrm{par}}(\theta,\theta')\,\|\Delta\theta\|.
\]
This term serves as the performance loss from pruning in Theorem~\ref{thm:three-term}, complementing the baseline regret and robustness gap to yield the full three–term trade-off.

\begin{theorem}[Performance--robustness trade-off under pruning]
\label{thm:three-term}
Fix any comparator policy $\bar\pi$. For pruned parameters $\theta'=\theta-\Delta\theta$,
\[
\mathrm{Reg}_{\mathrm{atk}}(\theta';\bar\pi)
\;\le\;
\underbrace{\mathrm{Reg}_{\mathrm{clean}}(\theta;\bar\pi)}_{\text{clean regret of unpruned}}
+\underbrace{\alpha\,c\,\mathcal L_{\mathrm{par}}(\theta,\theta')\,\|\Delta\theta\|}_{\text{performance loss from pruning}}
+\underbrace{\alpha\,c\,\tilde L_{\theta'}\,\varepsilon}_{\text{robustness gap of pruned}}.
\]
\end{theorem}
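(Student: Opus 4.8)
The plan is to derive the bound by chaining the three ingredients already in place: the attack--clean regret gap of Theorem~\ref{thm:unified-regret}, a telescoping identity for the clean regret, and the pruning value-drop estimate of Lemma~\ref{thm:value-drop}. Throughout, the comparator $\bar\pi$ enters only through differences of the form $J(\bar\pi)-(\cdot)$, so it cancels at each step; this is why the statement is allowed to hold for \emph{any} fixed $\bar\pi$.

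First I would apply Theorem~\ref{thm:unified-regret} directly at the pruned parameter $\theta'$. Since entrywise pruning preserves the feedforward architecture and the Lipschitz activations, $\pi_{\theta'}$ is again an admissible Gaussian (fixed $\Sigma$) or categorical softmax policy, so the theorem applies verbatim and gives
\[
\mathrm{Reg}_{\mathrm{atk}}(\theta';\bar\pi)-\mathrm{Reg}_{\mathrm{clean}}(\theta';\bar\pi)
= J(\pi_{\theta'})-\tilde J(\pi_{\theta'}\!\circ\!\nu^*)
\le \alpha\,c\,\tilde L_{\theta'}\,\varepsilon ,
\]
i.e. $\mathrm{Reg}_{\mathrm{atk}}(\theta';\bar\pi)\le \mathrm{Reg}_{\mathrm{clean}}(\theta';\bar\pi)+\alpha c\,\tilde L_{\theta'}\varepsilon$. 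This already isolates the third (robustness-gap) term of the claimed inequality.

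Next I would rewrite the clean regret at $\theta'$ in terms of the clean regret at the unpruned $\theta$ by inserting $\pm J(\pi_\theta)$:
\[
\mathrm{Reg}_{\mathrm{clean}}(\theta';\bar\pi)
= J(\bar\pi)-J(\pi_{\theta'})
= \big(J(\bar\pi)-J(\pi_\theta)\big)+\big(J(\pi_\theta)-J(\pi_{\theta'})\big)
= \mathrm{Reg}_{\mathrm{clean}}(\theta;\bar\pi)+\big(J(\pi_\theta)-J(\pi_{\theta'})\big).
\]
The value drop $J(\pi_\theta)-J(\pi_{\theta'})$ is exactly what Lemma~\ref{thm:value-drop} controls along the segment $\phi=\theta'+t(\theta-\theta')$, $t\in[0,1]$, namely by $\alpha c\,\mathcal L_{\mathrm{par}}(\theta,\theta')\,\|\Delta\theta\|$; only the one-sided estimate is needed, since a negative drop (pruning improving clean performance) only strengthens the inequality. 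Substituting this into the previous display and then into the bound from the first step yields the three-term inequality with the clean regret of the unpruned policy, the pruning performance loss $\alpha c\,\mathcal L_{\mathrm{par}}(\theta,\theta')\|\Delta\theta\|$, and the robustness gap $\alpha c\,\tilde L_{\theta'}\varepsilon$ appearing in order.

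There is no genuine analytic obstacle here — the theorem is essentially a bookkeeping corollary of the earlier results. The points that warrant care are: (i) verifying that $\pi_{\theta'}$ still lies in the admissible policy class so Theorem~\ref{thm:unified-regret} applies unchanged at $\theta'$, which holds because elementwise pruning merely zeroes entries of the weight matrices and leaves activations and layer structure intact; (ii) ensuring the path integral in $\mathcal L_{\mathrm{par}}(\theta,\theta')$ is taken over the segment joining $\theta'$ to $\theta$, consistent with the definition preceding Lemma~\ref{thm:value-drop}; and (iii) being explicit that only the one-sided value-drop bound is invoked, so the middle term is merely conservative — not violated — when pruning happens to help clean performance.
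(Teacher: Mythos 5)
Your proof is correct and is essentially the paper's argument: both rest on the same telescoping decomposition $J(\bar\pi)-\tilde J(\pi_{\theta'}\!\circ\!\nu^*)=[J(\bar\pi)-J(\pi_\theta)]+[J(\pi_\theta)-J(\pi_{\theta'})]+[J(\pi_{\theta'})-\tilde J(\pi_{\theta'}\!\circ\!\nu^*)]$, with the middle term controlled by Lemma~\ref{thm:value-drop} and the last by the certified-robustness bound applied at $\theta'$. The only cosmetic difference is that you invoke Theorem~\ref{thm:unified-regret} for the third term where the paper cites Theorem~\ref{thm:prune-certified-robust} directly; these yield the identical bound $\alpha\,c\,\tilde L_{\theta'}\,\varepsilon$.
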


\noindent\textbf{Interpretation.}
The three--term bound exposes a fundamental performance--robustness trade--off. 
The first term is the clean regret of the unpruned policy, determined by baseline training quality. 
The second term, $\alpha c\,\mathcal L_{\mathrm{par}}(\theta,\theta')\|\Delta\theta\|$, is the performance loss from pruning. 
Here $\mathcal L_{\mathrm{par}}$ is a path--averaged sensitivity: it measures how strongly the policy’s outputs react to parameter perturbations along the path from $\theta$ to $\theta'$. 
Low sensitivity implies pruning has little effect, while high sensitivity makes small weight changes costly. 
This explains why magnitude pruning is effective: removing small--magnitude weights keeps $\|\Delta\theta\|$ small, reducing the penalty.

The third term, $\alpha c\,\tilde L_{\theta'}\varepsilon$, is the robustness gap of the pruned policy, controlled by its input Lipschitz constant. 
Because pruning reduces $\tilde L_\theta$, this term always improves. 
Thus pruning simultaneously hurts via performance loss and helps via robustness, and the optimal sparsity balances these opposing effects. 
Pruning therefore acts not just as compression but as a structural intervention trading margin for robustness.

\section{Experiments and Results}
\label{sec:method}

We study the performance--robustness trade-off predicted by our SA-MDP theory under structured network sparsification. Concretely, we couple on-the-fly weight pruning with adversarially robust policy optimization on continuous-control benchmarks. This section specifies environments, policies, attacks, pruning strategies, and the full training--evaluation protocol. 

\subsection{Tasks and Policies}
\label{sec:tasks}
We evaluate on three MuJoCo continuous-control tasks from Gym: \texttt{Hopper}, \texttt{Walker2d}, and \texttt{HalfCheetah}. Policies are stochastic Gaussian actors $\pi_\theta(a\mid s)=\mathcal N(\mu_\theta(s), \Sigma)$ with state-independent diagonal covariance $\Sigma$. Value functions use separate MLPs. Unless otherwise noted, both actor and critic are multilayer perceptrons (MLPs) with Lipschitz activations and standard initialization (full architecture details are provided in Appendix~\ref{app:config}).

\subsection{State-Adversarial Training Objective}
\label{sec:sa-objective}
Our theory (Sec.~\ref{sec:theory}) shows that robustness bounds are governed by divergences between $\pi_\theta(\cdot\mid s)$ and $\pi_\theta(\cdot\mid \hat s)$ for perturbed states $\hat s\!\in\!B(s)$. By Pinsker’s inequality, these total variation terms can be controlled by KL divergences. To operationalize this, we adopt the SA-regularization term introduced in prior work on robust PPO (\citet{DBLP:conf/nips/0001CX0LBH20}):
\[
\mathcal R_{\mathrm{SA}}(\theta)\;=\;\mathbb{E}_{s\sim d_\mu^{\pi_\theta}}\Big[\max_{\hat s\in B(s)} D_{KL}\big(\pi_\theta(\cdot\mid s)\,\|\,\pi_\theta(\cdot\mid \hat s)\big)\Big],
\]
where $D_{KL}$ is instantiated as KL divergence. While KL does not appear directly in the robustness bounds, it serves as a theoretically justified surrogate via Pinsker’s inequality, and has been widely used in the literature on adversarially robust RL.  

The actor objective becomes
\(
\mathcal L_{\pi}(\theta)\;=\;\mathcal L_{\text{PPO}}(\theta)\;+\;\kappa\,\mathcal R_{\mathrm{SA}}(\theta),
\)
where $\kappa\!\ge\!0$ toggles the regularization strength. Setting $\kappa\!=\!0$ disables SA regularization, yielding pruning-only training. 

\noindent\textbf{Perturbation sets.}
For state attacks we use $\ell_\infty$ balls 
$B(s)=\{\hat s:\,\|\hat s-s\|_\infty\le\varepsilon\}$ 
in normalized state space, matching the SA-MDP formulation 
and robust PPO practice, with environment--specific budgets 
$\varepsilon=0.075$ (\texttt{hopper}), 
$0.05$ (\texttt{walker2d}), 
$0.15$ (\texttt{ant}), and 
$0.15$ (\texttt{halfcheetah}).

\subsection{Adversarial Attacks}
\label{sec:attacks}
We evaluate robustness against four standard state-adversarial attacks, summarized in Table~\ref{tab:attacks}. Each attack is applied at every control step during rollouts.  

\begin{table}[t]
\centering
\caption{Summary of adversarial attacks used during training and evaluation.}
\label{tab:attacks}
\footnotesize
\begin{tabular}{p{3cm}p{10cm}}
\toprule
\textbf{Attack} & \textbf{Description} \\
\midrule
Random & Samples $\hat s$ uniformly from the perturbation set $B(s)$.\\
Value-guided & Perturbs states to minimize $V^\pi(s)$ via gradient descent on the critic.\\
MAD & Maximizes $D_{\mathrm{KL}}(\pi_\theta(\cdot\!\mid\! s)\,\|\,\pi_\theta(\cdot\!\mid\!\hat s))$ with projected gradient steps.\\
Robust Sarsa (RS) & Uses a robust TD update of $Q^\pi$ to find perturbations $\hat s$ that minimize $Q^\pi(s,\pi(\hat s))$.\\
\bottomrule
\end{tabular}
\end{table}

\subsection{Pruning Strategies}
\label{sec:pruning}

We compare five pruning strategies applied to both actor and critic networks:
Random (uniform weight removal under ERK allocation),
Magnitude (removing the smallest weights),
Magnitude--STE (magnitude pruning with straight-through estimator updates),
Saliency (based on first-order Taylor scores),
and a dense No-Pruning baseline.
All pruning methods (except the baseline) follow a cubic sparsity schedule after a $25\%$ burn-in.

\noindent\textbf{Micro-pruning.}
We call a schedule that increases sparsity through many small, frequent mask updates
\emph{micro-pruning}, as opposed to applying a single large pruning step.
The global target sparsity still follows a cubic schedule, but the mask is adjusted incrementally
so that the model is pruned in fine-grained steps rather than all at once.

\noindent\textbf{Why gradual steps help.}
The three–term bound in Sec.~\ref{sec:theory} (Theorem~\ref{thm:three-term}) shows that pruning
introduces a performance loss proportional to
$\mathcal L_{\mathrm{par}}(\theta,\theta')\,\|\Delta\theta\|$,
where $\mathcal L_{\mathrm{par}}$ is a path–averaged sensitivity measuring how much Jacobians vary
along the pruning trajectory.
Large pruning steps can push parameters through regions where sensitivities change sharply,
making $\mathcal L_{\mathrm{par}}$ large.
By contrast, small incremental steps keep consecutive parameters close,
so the Jacobian varies smoothly and the integrand in $\mathcal L_{\mathrm{par}}$ stays stable.
Thus micro-pruning tends to accumulate performance cost more gently, while still benefiting
from the monotone decrease in the Lipschitz bound $\tilde L_\theta$ that controls robustness.

\noindent\textbf{Sweet-spot definition.}
To quantify the joint effect of pruning on standard performance and robustness, 
we define the \emph{sweet spot} of each method as the pruning level at which 
the average of normalized clean and normalized robust performance is maximized. 
This captures the pruning regime where robustness improvements are realized 
without disproportionate loss in clean-task return, and is reported consistently 
across environments and methods.


\subsection{Empirical Analysis}
\label{sec:results}

All reported results in this section are \emph{normalized against the unpruned SA-trained network}, 
which serves as our dense baseline and are averaged over 5 seeds. This ensures pruning is always evaluated relative to the strongest 
non-sparse policy rather than a weaker vanilla PPO baseline.

Our theory establishes that pruning alone monotonically improves certified robustness (Theorem~\ref{thm:prune-certified-robust}). 
During training, however, gradient updates can enlarge weight norms and hence $\tilde L_\theta$, so robustness is not globally monotone (cf. Training Remark, Sec.~\ref{sec:theory}). 
Nevertheless, each pruning step strictly decreases $\tilde L_\theta$ relative to the current parameters, acting as a monotone regularizer that counteracts the natural growth of weight norms during training. 
From this perspective, one should not expect perfectly monotone empirical curves, but rather robustness that tends to increase steadily with pruning, punctuated by fluctuations from training noise. 
We now test this prediction across \texttt{hopper}, \texttt{halfcheetah}, and \texttt{walker2d}, gradually building a picture of how pruning reshapes the performance--robustness landscape.

\noindent\textbf{Clean vs. robust frontiers.}
We begin by examining the overall trade-off between clean-task performance and robustness. 
Figure~\ref{fig:frontiers_combined} shows these frontiers for \texttt{hopper} and \texttt{halfcheetah}. 
In \texttt{hopper}, robustness climbs until around 40\% pruning before clean-task degradation takes over. 
\texttt{halfcheetah} is strikingly more tolerant, maintaining clean-task performance up to $\sim$70\% pruning. 
These patterns reflect the three-term decomposition in Theorem~\ref{thm:three-term}: pruning reduces the Lipschitz gap term, but excessive sparsity eventually drives large parameter displacements that erode performance. 

By contrast, \texttt{walker2d} is far less forgiving: robustness initially rises but collapses past 50\%. 
These differences align with environment dynamics: \texttt{halfcheetah}’s smoother transitions allow redundancy, while \texttt{walker2d}’s instability amplifies sensitivity. 
Appendix~\ref{app:figures} provides the full set of curves, confirming the reproducibility of these trends.

\begin{figure}[t]
    \centering
    \begin{subfigure}[t]{\linewidth}
        \centering
        \includegraphics[width=\linewidth]{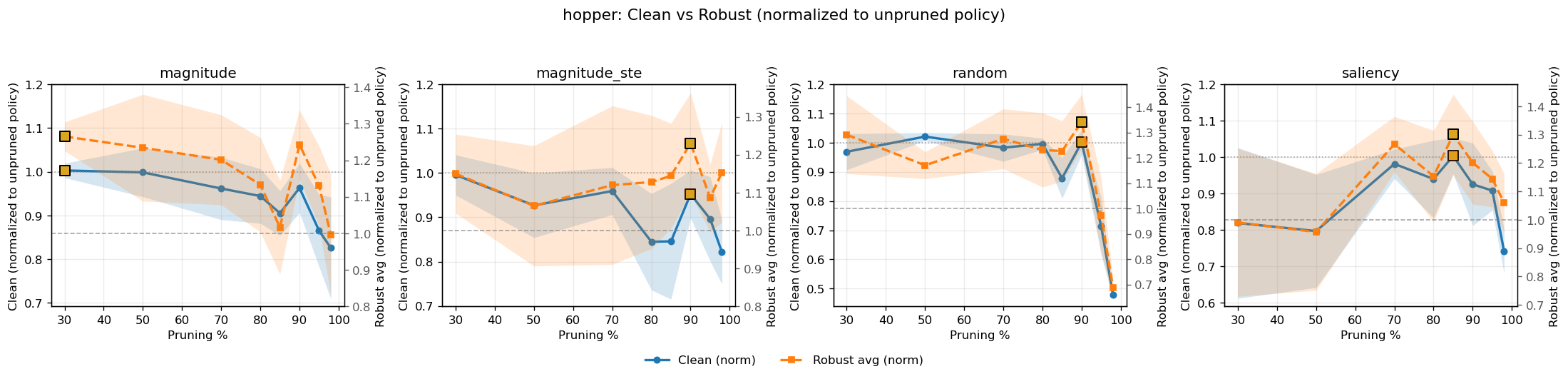}
    \end{subfigure}
    \vspace{0.5em}
    \begin{subfigure}[t]{\linewidth}
        \centering
        \includegraphics[width=\linewidth]{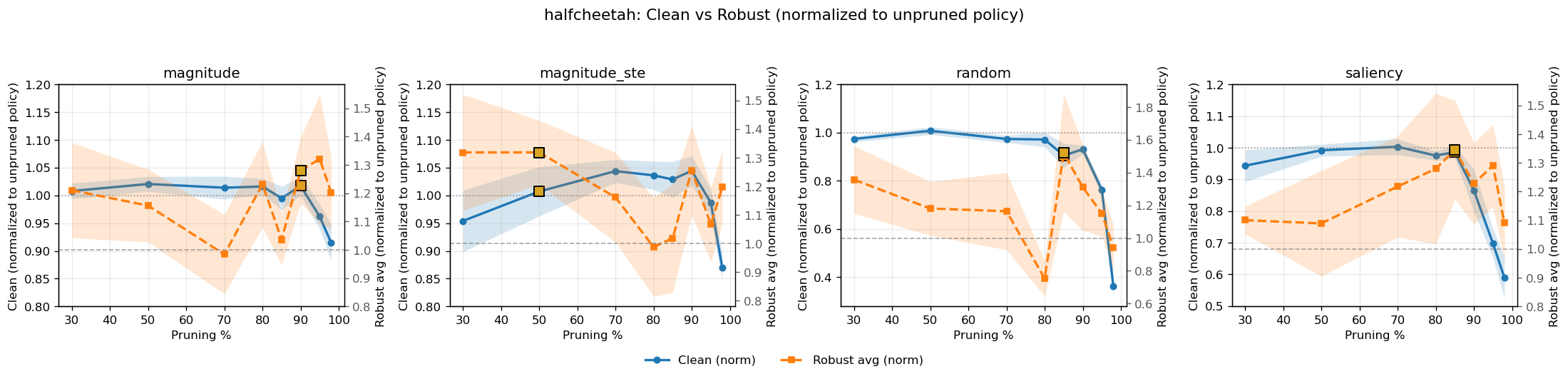}
    \end{subfigure}
    \caption{\textbf{Clean vs.\ robust frontiers under pruning.} 
    (Top) \texttt{hopper}: normalized clean and robust returns as pruning increases, across strategies. 
    (Bottom) \texttt{halfcheetah}: analogous trends with higher pruning tolerance. 
    All curves are normalized to the unpruned SA-trained policy; \emph{shaded regions denote $\pm$ one standard error across seeds}.}
    \label{fig:frontiers_combined}
\end{figure}

The other environments follow the same pattern but with different tolerances. 
\texttt{halfcheetah} is strikingly robust, maintaining clean-task performance up to $\sim$70\% pruning, whereas \texttt{walker2d} is far less forgiving: robustness initially rises but collapses past 50\%. 
These differences align with environment dynamics: \texttt{halfcheetah}’s smoother transitions allow redundancy, while \texttt{walker2d}’s instability amplifies sensitivity. 
Appendix~\ref{app:figures} provides the full set of curves, including seed variability, which confirm the reproducibility of these sweet spots.

The pruning method also matters. 
Magnitude pruning consistently yields the most stable frontiers, as expected from Theorem~\ref{thm:three-term} since it directly controls $\|\Delta \theta\|$. 
Saliency pruning looks competitive in \texttt{hopper} but breaks down in more complex environments, where instantaneous gradient saliency is a poor proxy for long-horizon contributions. 
Magnitude--STE introduces noise by pruning sensitive layers too aggressively, and random pruning is unsurprisingly the least reliable: it occasionally boosts robustness but often destroys clean-task returns.

\noindent\textbf{Attack-specific robustness.}
To understand robustness more finely, we next examine performance under different adversaries. 
Figure~\ref{fig:hopper_attacks} shows \texttt{hopper} curves under four state-adversarial attacks (Random, Value-guided, MAD, RS). 
Here a clearer picture emerges: pruning offers the strongest gains against broad-spectrum adversaries (MAD and RS), boosting robust returns by several hundred reward points in the 40--60\% sparsity range. 
By contrast, targeted Value-guided attacks are less affected, producing noisier or weaker gains. 
This contrast reflects the gap between global and local robustness: pruning reduces the global Lipschitz constant, but does not eliminate vulnerabilities to specific input patterns. 
In other words, pruning hardens the policy against generic perturbations, but some adversary-specific weaknesses remain. 
\texttt{halfcheetah} and \texttt{walker2d} exhibit the same qualitative trends (Appendix~\ref{app:figures}), though the precise sweet spot again depends on environment dynamics.

\begin{figure}[t]
    \centering
    \includegraphics[width=\linewidth]{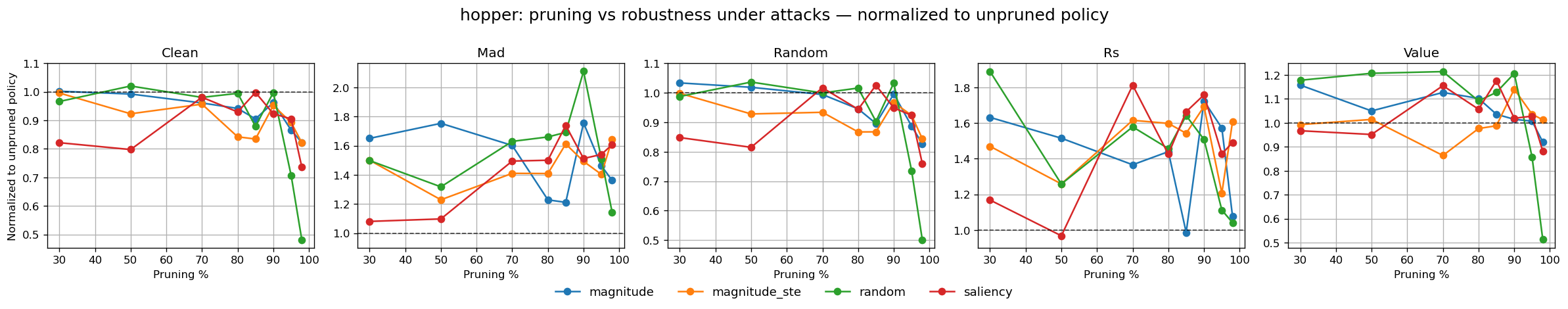}
    \caption{\textbf{\texttt{hopper} under attack.} 
    Robustness gains are strongest against MAD and RS adversaries, consistent with pruning’s global Lipschitz guarantee. 
    Improvements are smaller and less consistent against targeted Value-guided attacks. 
    Appendix~\ref{app:figures} shows analogous plots for \texttt{halfcheetah} and \texttt{walker2d}.}
    \label{fig:hopper_attacks}
\end{figure}

\noindent\textbf{Effect of adversarial training.}
A natural question is whether pruning simply mimics the effect of adversarial (SA) training. 
Figure~\ref{fig:saonoff} compares \texttt{hopper} returns with and without SA regularization. 
We find that pruning consistently improves robustness in both settings, confirming that it acts as an independent structural bias. 
The incremental effect of SA training under pruning is modest and attack--dependent 
(e.g., clearer under RS and Random, negligible under Clean and Value). 
This suggests that pruning and SA are not interchangeable, but their combination does not always yield additive gains.
\begin{figure}[t]
    \centering
    \includegraphics[width=\linewidth]{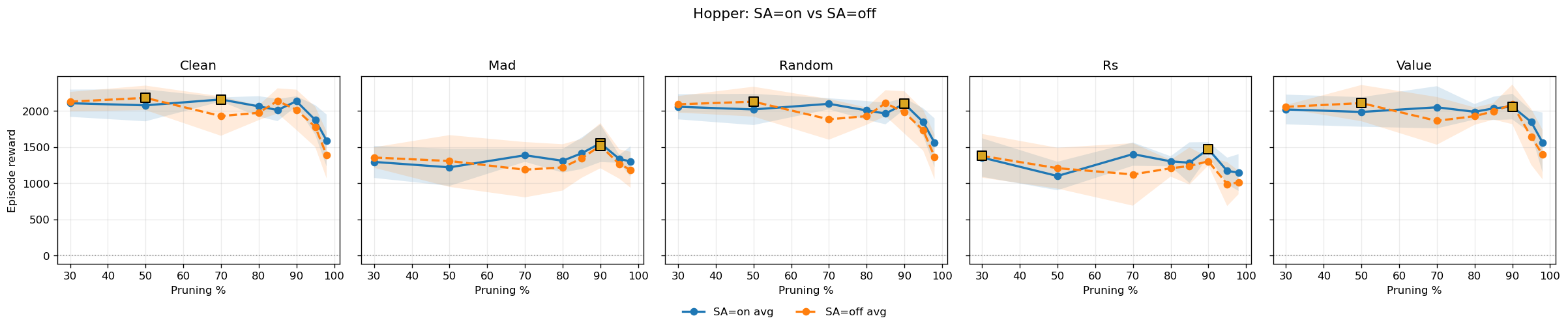}
    \caption{\textbf{Pruning vs. adversarial training (\texttt{hopper}).} 
    Pruning yields robustness gains in both regimes. 
    SA-regularization sometimes provides additional improvements (notably under RS and Random), but the effect is uneven across attacks.}
    \label{fig:saonoff}
\end{figure}

\noindent\textbf{Micropruning ablation.}
When pruning is interleaved with training, we observe that robustness gains 
and clean-task performance often evolve in parallel (Fig.~\ref{fig:micropruning})
Micro-pruning schedules, which update the pruning mask in small increments, 
allow the network to adjust gradually: each incremental step reduces the 
Lipschitz bound controlling robustness, while ongoing weight updates help 
offset the associated parameter change. 
As a result, performance curves remain smoother and robustness improvements 
are more stable, with sweet spots emerging at intermediate sparsity levels.
Figure~\ref{fig:micropruning} illustrates this effect across pruning intervals. 
Applying mask updates every 10--20 steps yields the most stable curves, while pruning every single step introduces more variability due to interaction with gradient noise. 
The same qualitative pattern holds across environments, with \texttt{hopper} benefitting most clearly from 20-step pruning, while \texttt{halfcheetah} and \texttt{walker2d} stabilize at 10--15 steps.
\begin{figure}[t]
    \centering
    \includegraphics[width=\linewidth]{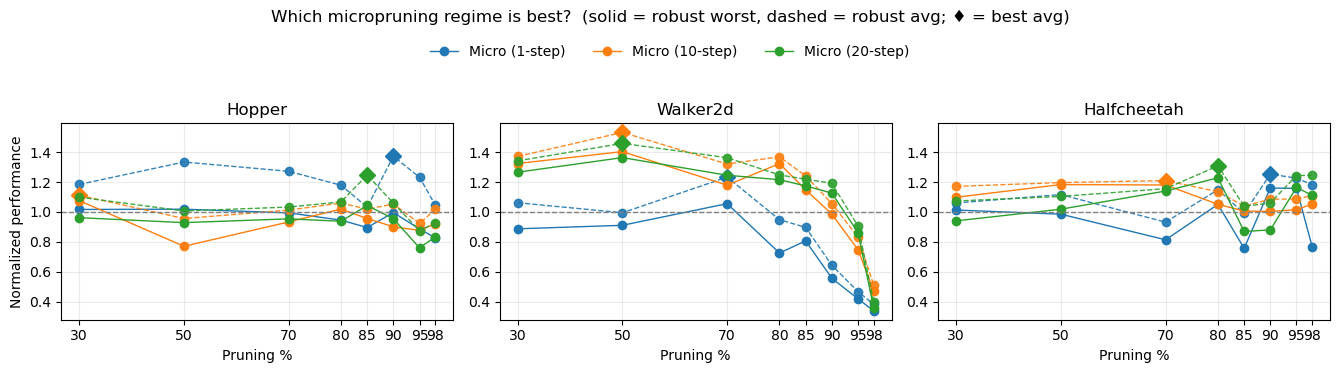}
    \caption{\textbf{Micropruning ablation.} 
    Updating masks in small, periodic increments (10--20 steps) leads to smoother curves 
    and more reliable sweet spots than pruning every step.}
    \label{fig:micropruning}
\end{figure}
\\ \\ \noindent\textbf{Sweet spot quantification.}
Finally, Appendix Table~\ref{tab:sweetspots} quantifies the sweet-spot sparsities across environments and pruning methods. 
\texttt{hopper} peaks around 30--50\%, \texttt{halfcheetah} around 50--70\%, and \texttt{walker2d} around 30--50\%. 
Notably, magnitude pruning consistently finds these ranges, while random pruning is far more variable. 
Across tasks, pruning improves normalized robust performance by $1.1\times$--$1.6\times$ relative to baseline, showing that the benefits are substantial and reproducible. 
Appendix Table~\ref{tab:avg_worst_seed_per_method_with_percent} further reports per-seed worst-case values, confirming that these gains are not driven by lucky seeds but persist across training runs.

\section{Conclusion and Future Work}
\label{sec:conclusion}
We studied \emph{element--wise} pruning in reinforcement learning and showed, both theoretically and empirically, that it acts as a monotone regularizer; each pruning step reduces a Lipschitz surrogate of robustness, while performance loss is captured by a three--term regret bound. 
Across continuous--control benchmarks, we consistently observe reproducible ``sweet spots'' where robustness gains outweigh clean-task degradation, under both standard and adversarial training. 
To the best of our knowledge, this is the first work to certify that pruning in RL can never reduce robustness, positioning it not only as a compression tool but as a structural intervention shaping the performance--robustness trade-off.

\noindent\textbf{Limitations and Future Work.}
Our study focuses on continuous–control benchmarks with MLP policies and element–wise pruning. While this setting offers a clean testbed, it leaves open important questions regarding generality. Extending the theory and experiments to pixel–based environments and richer architectures (e.g., CNNs, RNNs, or transformers) is a natural next step. Similarly, investigating structured pruning methods—such as neuron, channel, or layer pruning—could provide more practical compression gains and richer robustness–performance trade-offs. Another promising direction is to integrate pruning more tightly with the training process. For instance, jointly optimizing pruning with adversarial or robust training may yield complementary benefits, while data–dependent sensitivity estimates could enable sharper and more adaptive pruning schedules. Beyond the static adversarial models considered here, it is also important to evaluate pruning under richer and more realistic threat models, including adaptive, temporally correlated, or non–stationary attacks. Finally, while we have established pruning as a robustness–preserving intervention, its broader implications for policy generalization, exploration, and sample efficiency remain underexplored. Addressing these questions would help clarify when and how pruning can serve not only as a compression tool but as a principled means of shaping learning dynamics in reinforcement learning.

\section*{Acknowledgements}

James Pedley is funded through the Willowgrove Studentship. Francesco Quinzan is funded through the Oxford Martin School. The authors also acknowledge support from His Majesty’s Government in the development of this research.

\bibliographystyle{apalike}
\bibliography{references}
\newpage
\renewcommand{\thesection}{\Alph{section}}
\setcounter{section}{0}
\noindent {\LARGE\textbf{Appendices}}
%

\appendix 

\section{Algorithm}
\label{app:algo}
\begin{algorithm}
\caption{PPO with Pruning and Optional SA Regularization}
\label{alg:prune-sa-ppo}
\begin{algorithmic}[1]
\Require 
  Initial policy parameters $\theta$; Pruning masks $\{M_\ell\}$ for each layer (initialized as all ones, i.e.\ no pruning applied); 
  Adversarial budget $B(s)$ (state perturbation set); 
  Regularization weight $\kappa \ge 0$; 
  Total update horizon $T$; 
  Burn-in fraction $\beta \in [0,1]$; 
  Choice of pruning rule. 
\State Initialize update counter $t=0$
\For{each update iteration}
  \State Collect trajectories using adversarial states $\hat s \in B(s)$
  \State Compute advantages $\hat A$ and returns $\hat R$
  \For{each minibatch $\mathcal B$}
    \State $t \gets t+1$
    \State Compute PPO loss with SA regularisation 
      $\mathcal L \gets \mathcal L_{\mathrm{PPO}} + \kappa\,\mathcal R_{\mathrm{SA}}(\theta;\mathcal B)$ 
    \State Update network parameters with gradient descent
    \If{$t/T > \beta$ (network burn-in)}
      \State Update pruning masks $\{M_\ell\}$ according to chosen rule
      \State Apply masks to network parameters \Comment{pruning step}
    \EndIf
  \EndFor
\EndFor
\end{algorithmic}
\end{algorithm}

\makeatletter
\renewenvironment{proof}[1][Proof]{\par
  \pushQED{\qed}\normalfont
  \trivlist\item[\hskip\labelsep\bfseries #1.]\ }{\popQED\endtrivlist}
\makeatother

\section{Proofs}
\label{app:proofs}
\subsection{Proof of Theorem~\ref{thm:prune-certified-robust}}

\begin{proof}
For any policy $\pi$ and its optimal adversary $\nu^*(\pi)$ in the state-adversarial MDP (SA-MDP), it holds from \citet{DBLP:conf/nips/0001CX0LBH20} that
\begin{equation}\label{eq:sa-tv}
\max_{s}\big\{V^\pi(s)-\tilde V^{\pi\circ\nu^*(\pi)}(s)\big\}
\;\le\;\alpha \max_{s}\max_{\hat s\in B(s)}D_{\mathrm{TV}}\!\big(\pi(\cdot|s),\pi(\cdot|\hat s)\big),
\end{equation}
where $B(s)=\{\,\hat s:\|\hat s-s\|_2\le\varepsilon\,\}$ is the $\ell_2$ perturbation ball, and $D_{\mathrm{TV}}$ denotes total variation distance.  

\paragraph{Network Lipschitz bound.}
Let the policy network be an $L$-layer feedforward model with parameters $\theta=\{W_1,\dots,W_L\}$, biases $\{b_\ell\}$, and $L_{\sigma_\ell}$–Lipschitz activations $\sigma_\ell$. Explicitly, the network map $f_\theta:\mathcal S\to\mathbb R^d$ is the function composition
\[
f_\theta(s) \;=\; W_L \,\sigma_{L-1}\!\Big(W_{L-1}\,\sigma_{L-2}\big(\cdots \sigma_1(W_1 s + b_1)+b_{L-1}\big)\Big) + b_L,
\]
where $\sigma_\ell$ is applied elementwise. Biases do not affect Lipschitz constants.

The Lipschitz constant of a linear map $x\mapsto W_\ell x$ is its operator (spectral) norm $\|W_\ell\|_2$, since
\[
\|W_\ell x - W_\ell y\|_2 = \|W_\ell(x-y)\|_2 \le \|W_\ell\|_2 \|x-y\|_2.
\]
Thus the Lipschitz constant of $f_\theta$ is bounded by
\[
\mathrm{Lip}(f_\theta) \;\le\;\Bigg(\prod_{\ell=1}^{L-1}L_{\sigma_\ell}\Bigg)\,\prod_{\ell=1}^L \|W_\ell\|_2.
\]

Since computing or constraining $\|W_\ell\|_2$ can be difficult, we introduce monotone surrogates. For any matrix $A$,
\[
\|A\|_2 \;\le\; \|A\|_F, 
\qquad
\|A\|_2 \;\le\; \sqrt{\|A\|_1 \,\|A\|_\infty}.
\]
These upper bounds are monotone in the entries of $A$, hence suitable for analyzing pruning. Therefore,
\[
\mathrm{Lip}(f_\theta)
\;\le\;\Bigg(\prod_{\ell=1}^{L-1}L_{\sigma_\ell}\Bigg)\,
       \prod_{\ell=1}^L \min\!\Big\{\,\|W_\ell\|_F,\;\sqrt{\|W_\ell\|_1\|W_\ell\|_\infty}\,\Big\}
=:\tilde L_\theta.
\]
Thus for any $s,\hat s\in B(s)$,
\[
\|f_\theta(\hat s)-f_\theta(s)\|_2 \;\le\;\tilde L_\theta\,\varepsilon.
\]

\paragraph{Gaussian policies.}
For $\pi_\theta(a|s)=\mathcal N(\mu_\theta(s),\Sigma)$ with fixed $\Sigma\succ0$, the closed-form total variation distance between Gaussians with equal covariance gives
\[
D_{\mathrm{TV}}\!\big(\pi_\theta(\cdot|s),\pi_\theta(\cdot|\hat s)\big)
\;\le\;\frac{\|\mu_\theta(\hat s)-\mu_\theta(s)\|_2}{\sqrt{2\pi\,\lambda_{\min}(\Sigma)}}
\;\le\;\tfrac{1}{\sqrt{2\pi\,\lambda_{\min}(\Sigma)}}\,\tilde L_\theta\,\varepsilon.
\]

\paragraph{Categorical policies.}
For $\pi_\theta(\cdot|s)=\mathrm{softmax}(z_\theta(s))$, the log-partition function is $1/4$-smooth, which yields the standard bound
\[
D_{\mathrm{TV}}(\pi_\theta(\cdot|s),\pi_\theta(\cdot|\hat s))
\;\le\;\tfrac14\,\|z_\theta(s)-z_\theta(\hat s)\|_2
\;\le\;\tfrac14\,\tilde L_\theta\,\varepsilon.
\]

\paragraph{Effect of pruning.}
Let $\theta'$ be obtained by elementwise pruning, $W_\ell'=\mathcal M_\ell\odot W_\ell$ with binary masks $\mathcal M_\ell$. Each surrogate norm is monotone under pruning:
\[
\|W_\ell'\|_F \le \|W_\ell\|_F,
\qquad
\sqrt{\|W_\ell'\|_1\|W_\ell'\|_\infty}\;\le\;\sqrt{\|W_\ell\|_1\|W_\ell\|_\infty}.
\]
Hence $\tilde L_{\theta'}\le \tilde L_\theta$. Activation Lipschitz constants are unchanged, so the same bounds apply with $\tilde L_{\theta'}$, which is no larger.  

Therefore, for Gaussian ($c=\tfrac{1}{\sqrt{2\pi\,\lambda_{\min}(\Sigma)}}$) and categorical ($c=\tfrac14$) policies,
\[
\max_s \{V^{\pi_{\theta'}}(s)-\tilde V^{\pi_{\theta'}\circ\nu^*(\pi_{\theta'})}(s)\}
\;\le\;\alpha\,c\,\tilde L_{\theta'}\,\varepsilon
\;\le\;\alpha\,c\,\tilde L_\theta\,\varepsilon.
\]
Thus pruning cannot worsen the certified robustness bound.
\end{proof}

\noindent\textbf{Tightness of the bound.}
The surrogate Lipschitz bound \(\tilde L_\theta\) from Theorem~\ref{thm:prune-certified-robust} is guaranteed to be
monotone under entrywise pruning, but it can be loose compared to the
true sensitivity of the policy. A sharper, distribution--dependent refinement is
given in Lemma~\ref{lem:local-bound}:
\[
TV(\pi_\theta(\cdot\mid s),\,\pi_\theta(\cdot\mid s+\epsilon))
\;\le\; c\Big(\|J_{g_\theta}(s)\|_{\mathrm{op}}\,\epsilon+\tfrac12\beta\,\epsilon^2\Big),
\]
where \(\beta\) is a curvature constant capturing the local variation of the Jacobian,
e.g.\ an upper bound on the Lipschitz constant of \(J_{g_\theta}(s)\). This local bound is (trivially)
always no larger than the global bound (exact for ReLU networks, up to an
\(O(\epsilon^2)\) term otherwise), and often much tighter since typical Jacobians have small operator norm. However,
unlike \(\tilde L_\theta\), it is not guaranteed to decrease monotonically under
pruning. Thus, the global bound provides provable monotone improvement,
while the local refinement better reflects the true robustness landscape but may vary non-monotonically.

\begin{lemma}[Local robustness bound]\label{lem:local-bound}
Let $\pi_\theta$ be either a Gaussian policy 
$\pi_\theta(a\mid s)=\mathcal{N}(\mu_\theta(s),\Sigma)$ 
with fixed $\Sigma \succ 0$, 
or a categorical policy 
$\pi_\theta(\cdot\mid s) = \mathrm{softmax}(z_\theta(s))$. 
Suppose the network outputs $g_\theta(s)$ 
(mean $\mu_\theta(s)$ or logits $z_\theta(s)$) are $\beta$-smooth, i.e.,
$\|J_{g_\theta}(x)-J_{g_\theta}(y)\|_{\mathrm{op}}\le \beta\|x-y\|_2$ for all $x,y$.
Then for any perturbation $\epsilon$ and state $s$,
\[
TV\!\big(\pi_\theta(\cdot\mid s),\,\pi_\theta(\cdot\mid s+\epsilon)\big)
\;\le\; c\Big(\|J_{g_\theta}(s)\|_{\mathrm{op}}\,\|\epsilon\|_2
+\tfrac12\,\beta\,\|\epsilon\|_2^2\Big),
\]
where $J_{g_\theta}(s)$ is the Jacobian of $g_\theta$ at $s$ (operator norm induced by $\ell_2$),
and $c=\tfrac{1}{\sqrt{2\pi\,\lambda_{\min}(\Sigma)}}$ for Gaussians and $c=\tfrac14$ for categoricals.
\end{lemma}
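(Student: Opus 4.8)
The plan is to reduce the total variation bound to a bound on $\|g_\theta(s)-g_\theta(s+\epsilon)\|_2$, then control that difference by a second-order (Taylor) expansion of $g_\theta$ along the segment from $s$ to $s+\epsilon$. First I would invoke the two policy-specific total variation estimates already established in the proof of Theorem~\ref{thm:prune-certified-robust}: for Gaussians with equal covariance, $D_{\mathrm{TV}}(\mathcal N(\mu_1,\Sigma),\mathcal N(\mu_2,\Sigma))\le \|\mu_1-\mu_2\|_2/\sqrt{2\pi\,\lambda_{\min}(\Sigma)}$, and for softmax policies $D_{\mathrm{TV}}(\mathrm{softmax}(z_1),\mathrm{softmax}(z_2))\le \tfrac14\|z_1-z_2\|_2$ (from $1/4$-smoothness of the log-partition function combined with Pinsker). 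In both cases this is exactly the constant $c$ in the statement, so it remains to show $\|g_\theta(s)-g_\theta(s+\epsilon)\|_2 \le \|J_{g_\theta}(s)\|_{\mathrm{op}}\|\epsilon\|_2 + \tfrac12\beta\|\epsilon\|_2^2$.

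For that inequality I would write $g_\theta(s+\epsilon)-g_\theta(s) = \int_0^1 J_{g_\theta}(s+t\epsilon)\,\epsilon\,\mathrm dt$ by the fundamental theorem of calculus (valid since $g_\theta$ is $C^1$ — a composition of smooth/Lipschitz-differentiable pieces, and $\beta$-smoothness of $J_{g_\theta}$ is assumed). Then add and subtract $J_{g_\theta}(s)\epsilon$:
\[
g_\theta(s+\epsilon)-g_\theta(s) = J_{g_\theta}(s)\epsilon + \int_0^1 \big(J_{g_\theta}(s+t\epsilon)-J_{g_\theta}(s)\big)\epsilon\,\mathrm dt.
\]
Taking $\ell_2$ norms, applying the triangle inequality for integrals, and using $\|(J_{g_\theta}(s+t\epsilon)-J_{g_\theta}(s))\epsilon\|_2 \le \|J_{g_\theta}(s+t\epsilon)-J_{g_\theta}(s)\|_{\mathrm{op}}\|\epsilon\|_2 \le \beta\,\|t\epsilon\|_2\,\|\epsilon\|_2 = \beta t\|\epsilon\|_2^2$ (by the $\beta$-smoothness hypothesis), the integral term is bounded by $\beta\|\epsilon\|_2^2\int_0^1 t\,\mathrm dt = \tfrac12\beta\|\epsilon\|_2^2$, while the first term is bounded by $\|J_{g_\theta}(s)\|_{\mathrm{op}}\|\epsilon\|_2$. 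Combining gives the claimed bound on $\|g_\theta(s)-g_\theta(s+\epsilon)\|_2$, and substituting into the total variation estimate finishes both cases.

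The main obstacle, such as it is, is not analytical difficulty but making sure the two reductions compose cleanly: one must verify that the Gaussian and categorical $D_{\mathrm{TV}}$ bounds are genuinely in terms of $\|g_\theta(s)-g_\theta(s+\epsilon)\|_2$ with the stated constant $c$ (rather than, say, a Mahalanobis norm or a KL-then-Pinsker step that loses a factor), and that the smoothness hypothesis $\|J_{g_\theta}(x)-J_{g_\theta}(y)\|_{\mathrm{op}}\le\beta\|x-y\|_2$ is exactly what is needed for the integral remainder estimate — it is. One minor subtlety worth a remark: for ReLU networks $g_\theta$ is only piecewise linear, so $J_{g_\theta}$ is piecewise constant and $\beta=0$ away from the measure-zero set of kinks; the bound then holds with the $O(\epsilon^2)$ term vanishing, consistent with the discussion preceding the lemma, and one can either restrict to the differentiable regime or pass to Clarke subgradients without changing the conclusion.
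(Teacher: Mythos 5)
Your proposal is correct and follows essentially the same route as the paper's proof: a first-order Taylor expansion with integral remainder, the bound $\|r(\epsilon)\|_2\le\int_0^1\beta t\,\|\epsilon\|_2^2\,\mathrm{d}t=\tfrac12\beta\|\epsilon\|_2^2$ via the $\beta$-smoothness hypothesis, and then the closed-form Gaussian and softmax total-variation estimates with the same constants $c$. The only difference is the order in which you compose the two reductions, plus your (reasonable, optional) remark on the ReLU/piecewise-linear case.
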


\begin{proof}
By first-order Taylor's theorem with integral remainder and the $\beta$-smoothness of $g_\theta$,
\[
g_\theta(s+\epsilon)
= g_\theta(s) + J_{g_\theta}(s)\,\epsilon 
+ \underbrace{\int_0^1\!\big(J_{g_\theta}(s+t\epsilon)-J_{g_\theta}(s)\big)\,\epsilon\,\mathrm{d}t}_{r(\epsilon)},
\]
and hence
\[
\|r(\epsilon)\|_2 
\;\le\;\int_0^1 \|J_{g_\theta}(s+t\epsilon)-J_{g_\theta}(s)\|_{\mathrm{op}}\;\|\epsilon\|_2\,\mathrm{d}t
\;\le\;\int_0^1 \beta\,t\,\|\epsilon\|_2^2\,\mathrm{d}t
\;=\;\tfrac12\,\beta\,\|\epsilon\|_2^2.
\]
Therefore,
\[
\|g_\theta(s+\epsilon)-g_\theta(s)\|_2
\;\le\;\|J_{g_\theta}(s)\|_{\mathrm{op}}\,\|\epsilon\|_2+\tfrac12\,\beta\,\|\epsilon\|_2^2.
\]

For Gaussians with identical covariance $\Sigma\succ0$, the closed-form total variation bound gives
\(
TV\!\big(\pi_\theta(\cdot\mid s),\pi_\theta(\cdot\mid s+\epsilon)\big)
\le \|\,\mu_\theta(s+\epsilon)-\mu_\theta(s)\|_2/\sqrt{2\pi\,\lambda_{\min}(\Sigma)}.
\)
For categoricals, the softmax log-partition is $1/4$-smooth, yielding
\(
TV\!\big(\pi_\theta(\cdot\mid s),\pi_\theta(\cdot\mid s+\epsilon)\big)
\le \tfrac14 \|\,z_\theta(s+\epsilon)-z_\theta(s)\|_2.
\)
Applying these with $g_\theta$ as the mean or logits respectively establishes the claim.
\end{proof}

\subsection{Lemma~\ref{thm:unified-regret}}
\begin{lemma}[Expected robustness gap]
\label{thm:expected-tv}
For any start-state distribution $\mu$,
\[
\mathbb{E}_{s_0\sim \mu}[V_{\pi_\theta}(s_0)-\tilde V_{\pi_\theta\circ \nu^*}(s_0)]
\;\le\;
\alpha\,\mathbb{E}_{s\sim d_\mu^{\pi_\theta}}\!\big[TV(\pi_\theta(\cdot\mid s),\pi_\theta(\cdot\mid \nu^*(s)))\big]
\;\le\; \mathcal B(\theta).
\]
\end{lemma}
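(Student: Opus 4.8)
The plan is to prove Lemma~\ref{thm:expected-tv} (labelled \texttt{thm:expected-tv}, the ``Expected robustness gap'' statement) in two stages matching the two inequalities. For the first inequality, I would invoke the state-adversarial performance bound of \citet{DBLP:conf/nips/0001CX0LBH20}, but in its \emph{expectation} form rather than the worst-case form used in the proof of Theorem~\ref{thm:prune-certified-robust}. The key identity is that for a fixed policy $\pi_\theta$ and its optimal adversary $\nu^*$, the per-state value gap $V^{\pi_\theta}(s_0)-\tilde V^{\pi_\theta\circ\nu^*}(s_0)$ admits a performance-difference-lemma style expansion: the loss from acting on perturbed states instead of true states telescopes along trajectories, and each step contributes a term controlled by how far $\pi_\theta(\cdot\mid\nu^*(s))$ is from $\pi_\theta(\cdot\mid s)$ in total variation, weighted by the discounted occupancy $d_\mu^{\pi_\theta}$. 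The factor $\alpha = 2[1+\gamma/(1-\gamma)^2]R_{\max}$ is exactly the constant that comes out of bounding the reward differences and the mismatch in visitation distributions over the horizon. So the first inequality is essentially a restatement of the SA-MDP bound after taking $\mathbb{E}_{s_0\sim\mu}$ and recognizing the right-hand side as an occupancy-weighted average of the one-step TV perturbation at $\nu^*(s)$.

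For the second inequality, the chain is short: by definition of $\nu^*$ as the \emph{optimal} SA adversary for $\pi_\theta$, and by definition of $TV_{\max}(s;\theta)=\max_{\hat s\in B(s)}D_{\mathrm{TV}}(\pi_\theta(\cdot\mid s),\pi_\theta(\cdot\mid\hat s))$, we have $\nu^*(s)\in B(s)$ and hence $TV(\pi_\theta(\cdot\mid s),\pi_\theta(\cdot\mid\nu^*(s)))\le TV_{\max}(s;\theta)$ pointwise in $s$. Taking expectations over $s\sim d_\mu^{\pi_\theta}$ gives $\mathbb{E}_{s\sim d_\mu^{\pi_\theta}}[TV(\pi_\theta(\cdot\mid s),\pi_\theta(\cdot\mid\nu^*(s)))]\le\mathbb{E}_{s\sim d_\mu^{\pi_\theta}}[TV_{\max}(s;\theta)]=F(\theta)$. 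Multiplying by $\alpha$ yields $\alpha F(\theta)=\mathcal B(\theta)$, which is precisely the claimed upper bound. This step is purely definitional and should take only a line or two.

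The main obstacle is the first inequality: I need to make sure the expectation-form SA-MDP bound is actually what \citet{DBLP:conf/nips/0001CX0LBH20} prove (or is an easy corollary of their worst-case statement \eqref{eq:sa-tv}). The worst-case bound pulls a $\max_s$ outside and a $\max_{\hat s\in B(s)}$ inside, whereas here I want to keep the start-state expectation and replace the worst-case perturbation by the specific optimal perturbation $\nu^*(s)$, averaged against $d_\mu^{\pi_\theta}$. The cleanest route is to re-derive it directly: write $V^{\pi_\theta}(s_0)-\tilde V^{\pi_\theta\circ\nu^*}(s_0)$ via a coupling/telescoping argument over the joint trajectory, bound the instantaneous discrepancy at each step by $2R_{\max}\cdot TV(\pi_\theta(\cdot\mid s_t),\pi_\theta(\cdot\mid\nu^*(s_t)))$ plus a downstream value-mismatch term bounded by $\tfrac{\gamma}{(1-\gamma)^2}R_{\max}$ times the same TV, then sum the geometric series and re-express the trajectory sum as an expectation over $d_\mu^{\pi_\theta}$ (absorbing the $1/(1-\gamma)$ normalization into the definition of the discounted occupancy). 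This reproduces the constant $\alpha$ and the occupancy-weighted TV average. If one prefers to cite rather than re-derive, it suffices to note that the argument of \citet{DBLP:conf/nips/0001CX0LBH20} already proceeds state-by-state before the final $\max_s$ relaxation, so the expectation version is immediate; I would state this and move on, since the bulk of the technical content of the paper lies in the Lipschitz surrogate, not in re-proving the SA-MDP value bound.
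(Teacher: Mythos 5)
Your proposal matches the paper's proof essentially exactly: the first inequality is obtained by invoking the SA-MDP value-gap bound of \citet{DBLP:conf/nips/0001CX0LBH20} in expectation form (over $s_0\sim\mu$ and the occupancy $d_\mu^{\pi_\theta}$) rather than the worst-case $\max_s$ form, and the second inequality is the purely definitional chain $TV(\pi_\theta(\cdot\mid s),\pi_\theta(\cdot\mid\nu^*(s)))\le TV_{\max}(s;\theta)$, followed by taking expectations to get $F(\theta)$ and multiplying by $\alpha$ to obtain $\mathcal B(\theta)$. Your extra sketch of how to re-derive the expectation-form bound via telescoping is more detail than the paper provides (the paper simply cites the result), but it is the same underlying argument.
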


\begin{proof}
The first inequality is obtained from \citet{DBLP:conf/nips/0001CX0LBH20} by taking the expectation over the difference in values instead of \(max_s\). The constant $\alpha$ (defined in the main paper) collects the reward bound and $\gamma$.  

For the second inequality, note that by definition
\[
TV(\pi_\theta(\cdot\mid s),\pi_\theta(\cdot\mid \nu^*(s))) \;\le\; TV_{\max}(s;\theta).
\]
Taking expectations over $s\sim d_\mu^{\pi_\theta}$ yields
\[
\mathbb{E}_{s}\Big[TV(\pi_\theta(\cdot\mid s),\pi_\theta(\cdot\mid \nu^*(s)))\Big]
\;\le\; \mathbb{E}_{s}\big[TV_{\max}(s;\theta)\big]
= F(\theta).
\]
Multiplying by $\alpha$ gives the claimed bound $\mathcal{B}(\theta)=\alpha F(\theta)$.
\end{proof}

\subsection{Lemma~\ref{thm:value-drop}}

\begin{lemma}[Clean/attacked value drop under pruning]
\label{thm:value-drop}
Let $\pi_\theta$ be Gaussian with fixed covariance $\Sigma\succ0$ or categorical softmax with logits $z_\theta(s)$ from a Lipschitz network.
Assume that for each $s$, the map $\phi\mapsto g_\phi(s)$ is differentiable almost everywhere.
For any pruned parameters $\theta'=\theta-\Delta\theta$, define
\[
\hat L^{\mathrm{par}}_\phi
:=\Big(\mathbb{E}_{s\sim d_\mu^{\pi_\theta}}\|J_\phi g_\phi(s)\|_{\mathrm{op}}^2\Big)^{1/2},
\qquad
\mathcal L_{\mathrm{par}}(\theta,\theta') := \int_0^1 \hat L^{\mathrm{par}}_{\theta'+t(\theta-\theta')}\,\mathrm{d}t.
\]
With $c=\tfrac{1}{\sqrt{2\pi\,\lambda_{\min}(\Sigma)}}$, \(g_{\phi} = \mu_{\phi}\) for Gaussian and $c=\tfrac14$, \(g_{\phi} = z_{\phi}\) for categorical, we have
\[
J(\pi_\theta)-J(\pi_{\theta'})
\;\le\;\alpha\,c\,\mathcal L_{\mathrm{par}}(\theta,\theta')\,\|\Delta\theta\|,
\qquad
\tilde J(\pi_\theta\!\circ\!\nu^*)-\tilde J(\pi_{\theta'}\!\circ\!\nu^*)
\;\le\;\alpha\,c\,\mathcal L_{\mathrm{par}}(\theta,\theta')\,\|\Delta\theta\|.
\]
(Here $\|\cdot\|$ on parameters is Euclidean, and $\|\cdot\|_{\mathrm{op}}$ is the operator norm induced by $\ell_2$.) 
\end{lemma}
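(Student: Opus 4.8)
The plan is to prove the two value-drop inequalities in parallel, since they have identical structure: one compares clean values $J$, the other robust values $\tilde J$ under the (fixed) optimal adversary $\nu^*$. First I would invoke Lemma~\ref{thm:expected-tv} (the expected robustness gap) in its more general form: for any two policies $\pi,\pi'$, the performance difference theorem / simulation lemma gives $|J(\pi)-J(\pi')|\le \alpha\,\mathbb{E}_{s\sim d_\mu^{\pi}}[TV(\pi(\cdot\mid s),\pi'(\cdot\mid s))]$, with the analogous statement for $\tilde J$ where the state distribution and the adversary composition carry through. Concretely I would apply this with $\pi=\pi_\theta$ and $\pi'=\pi_{\theta'}$, reducing both claims to bounding $\mathbb{E}_{s\sim d_\mu^{\pi_\theta}}[TV(\pi_\theta(\cdot\mid s),\pi_{\theta'}(\cdot\mid s))]$.

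Next I would pass from total variation in action space to Euclidean distance in the network's output space, using the same closed-form bounds already established in the proof of Theorem~\ref{thm:prune-certified-robust}: for Gaussians with shared covariance, $TV(\pi_\theta(\cdot\mid s),\pi_{\theta'}(\cdot\mid s))\le c\,\|\mu_\theta(s)-\mu_{\theta'}(s)\|_2$ with $c=(\sqrt{2\pi\lambda_{\min}(\Sigma)})^{-1}$; for categorical softmax, $TV\le \tfrac14\|z_\theta(s)-z_{\theta'}(s)\|_2$. In both cases this is $c\,\|g_\theta(s)-g_{\theta'}(s)\|_2$. The core new estimate is then a parameter-space mean-value bound: writing $\phi(t)=\theta'+t(\theta-\theta')=\theta-(1-t)\Delta\theta$, the fundamental theorem of calculus along this segment gives $g_\theta(s)-g_{\theta'}(s)=\int_0^1 J_{\phi(t)}g_{\phi(t)}(s)\,(\theta-\theta')\,\mathrm{d}t$ (valid a.e. by the differentiability assumption), hence $\|g_\theta(s)-g_{\theta'}(s)\|_2\le \|\Delta\theta\|\int_0^1\|J_{\phi(t)}g_{\phi(t)}(s)\|_{\mathrm{op}}\,\mathrm{d}t$.

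Finally I would take $\mathbb{E}_{s\sim d_\mu^{\pi_\theta}}$ of both sides, push the expectation inside the $t$-integral by Tonelli, and apply Jensen/Cauchy--Schwarz to replace $\mathbb{E}_s\|J_{\phi(t)}g_{\phi(t)}(s)\|_{\mathrm{op}}$ by its $L^2$ version $\hat L^{\mathrm{par}}_{\phi(t)}=(\mathbb{E}_s\|J_{\phi(t)}g_{\phi(t)}(s)\|_{\mathrm{op}}^2)^{1/2}$; integrating in $t$ recovers exactly $\mathcal L_{\mathrm{par}}(\theta,\theta')$. Chaining the three steps yields $J(\pi_\theta)-J(\pi_{\theta'})\le \alpha c\,\mathcal L_{\mathrm{par}}(\theta,\theta')\,\|\Delta\theta\|$, and the robust-value version follows identically once one checks that the adversary is held fixed at $\nu^*$ (the optimal adversary for $\pi_\theta$), so that $\pi_\theta\circ\nu^*$ and $\pi_{\theta'}\circ\nu^*$ differ only through $g$ at the perturbed states, and the relevant visitation distribution is that of $\pi_\theta\circ\nu^*$.

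The main obstacle I anticipate is the robust-value half: one must be careful that the inequality $\tilde J(\pi_\theta\circ\nu^*)-\tilde J(\pi_{\theta'}\circ\nu^*)\le \alpha\,\mathbb{E}[TV(\cdots)]$ uses $\nu^*=\nu^*(\pi_\theta)$ for \emph{both} policies (rather than each policy's own optimal adversary), which is what makes the TV term comparable to the clean case; using $\nu^*(\pi_{\theta'})$ would break the decomposition. A secondary subtlety is justifying the path integral for $g_\phi(s)$ when $\phi\mapsto g_\phi(s)$ is only differentiable almost everywhere (e.g. ReLU networks): the segment from $\theta'$ to $\theta$ may pass through nondifferentiability, so one should note that $g_\phi(s)$ is Lipschitz in $\phi$ and absolutely continuous along the segment, so the FTC still holds with the a.e.-defined Jacobian — this is the only place where the ``differentiable almost everywhere'' hypothesis is genuinely used.
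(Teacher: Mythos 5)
Your proposal is correct and follows essentially the same route as the paper's proof: reduce the value gap to an expected TV term via the SA-MDP value-difference bound, convert TV to $c\,\|g_\theta(s)-g_{\theta'}(s)\|_2$ using the Gaussian/softmax closed-form bounds, apply the fundamental theorem of calculus along the parameter segment, and finish with Tonelli and Cauchy--Schwarz to recover $\mathcal L_{\mathrm{par}}$. Your added remarks on holding the adversary fixed at $\nu^*(\pi_\theta)$ for both policies and on absolute continuity of $\phi\mapsto g_\phi(s)$ along the segment are correct clarifications of points the paper treats only implicitly.
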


\begin{proof}
By the SA–MDP value–difference bound (Lemma~\ref{thm:expected-tv} in TV form) applied to two policies at the same state,
\[
J(\pi_\theta)-J(\pi_{\theta'}) \;\le\; \alpha\,\mathbb{E}_{s\sim d_\mu^{\pi_\theta}}
\Big[\,TV\!\big(\pi_\theta(\cdot|s),\pi_{\theta'}(\cdot|s)\big)\Big].
\]

\textbf{Gaussian:} For Gaussians with identical covariance $\Sigma\succ0$, the closed-form TV bound is
\[
TV\!\big(\pi_\theta(\cdot|s),\pi_{\theta'}(\cdot|s)\big)
\;\le\; \frac{\|\mu_\theta(s)-\mu_{\theta'}(s)\|_2}{\sqrt{2\pi\,\lambda_{\min}(\Sigma)}}.
\]

\textbf{Categorical:} For $\pi=\mathrm{softmax}(z)$, the log-partition is $1/4$-smooth, yielding
\[
TV(\pi_\theta(\cdot|s),\pi_{\theta'}(\cdot|s)) \;\le\; \tfrac14 \|z_\theta(s)-z_{\theta'}(s)\|_2.
\]

Thus, in both cases,
\[
TV\!\big(\pi_\theta(\cdot|s),\pi_{\theta'}(\cdot|s)\big)\;\le\; c\,\|g_\theta(s)-g_{\theta'}(s)\|_2.
\]

Now let $\phi(t):=\theta'+t(\theta-\theta')$, $t\in[0,1]$. Since $g_\phi(s)$ is (a.e.) differentiable in $\phi$, the fundamental theorem of calculus along $\phi(t)$ gives
\[
g_\theta(s)-g_{\theta'}(s)=\int_0^1 J_{\phi(t)}g_{\phi(t)}(s)\,(\theta-\theta')\,\mathrm{d}t.
\]
Taking norms and using the operator norm,
\[
\|g_\theta(s)-g_{\theta'}(s)\|_2\;\le\;\int_0^1 \|J_{\phi(t)}g_{\phi(t)}(s)\|_{\mathrm{op}}\,\mathrm{d}t\;\|\Delta\theta\|.
\]
By Tonelli/Fubini to exchange expectation and integral, and Cauchy–Schwarz in $s$,
\[
\mathbb{E}_{s}\|g_\theta(s)-g_{\theta'}(s)\|_2
\;\le\;\int_0^1 \Big(\mathbb{E}_{s}\|J_{\phi(t)}g_{\phi(t)}(s)\|_{\mathrm{op}}^2\Big)^{1/2} \mathrm{d}t\;\|\Delta\theta\|
\;=\;\mathcal L_{\mathrm{par}}(\theta,\theta')\,\|\Delta\theta\|.
\]
Combining with the TV inequality yields the claimed clean-value bound with factor $\alpha c$. The attacked-value bound is identical with $\tilde J$, since Lemma~\ref{thm:expected-tv} holds for robust values with the same TV control.
\end{proof}

\subsection{Proof of Theorem~\ref{thm:three-term}}
\begin{proof}
Decompose
\[
\mathrm{Reg}_{\mathrm{atk}}(\theta';\bar\pi)
= J(\bar\pi)-\tilde J(\pi_{\theta'}\!\circ\!\nu^*)
= \big[J(\bar\pi)-J(\pi_\theta)\big]
+ \big[J(\pi_\theta)-J(\pi_{\theta'})\big]
+ \big[J(\pi_{\theta'})-\tilde J(\pi_{\theta'}\!\circ\!\nu^*)\big].
\]
The first term is \(\mathrm{Reg}_{\mathrm{clean}}(\theta;\bar\pi)\). The second term is bounded by Lemma~2:
\(J(\pi_\theta)-J(\pi_{\theta'})\le \alpha c\,\mathcal L_{\mathrm{par}}(\theta,\theta')\,\|\Delta\theta\|\).
For the third term, apply Theorem~\ref{thm:prune-certified-robust} to \(\pi_{\theta'}\):
\(J(\pi_{\theta'})-\tilde J(\pi_{\theta'}\!\circ\!\nu^*)\le \alpha c\,\tilde L_{\theta'}\,\varepsilon\).
Summing the bounds yields the claim.
\end{proof}

\section{Additional results}
\label{app:figures}

This appendix provides the full set of figures and tables referenced in the main text.

\begin{figure}[H]
    \centering
    \includegraphics[width=\linewidth]{imgs/hc_t.png}
    \caption{\textbf{\texttt{halfcheetah}: Clean vs.\ Robust frontier.} 
    Normalized clean and robust returns as pruning increases, across pruning strategies.}
    \label{fig:hc_t}
\end{figure}

\begin{figure}[H]
    \centering
    \includegraphics[width=\linewidth]{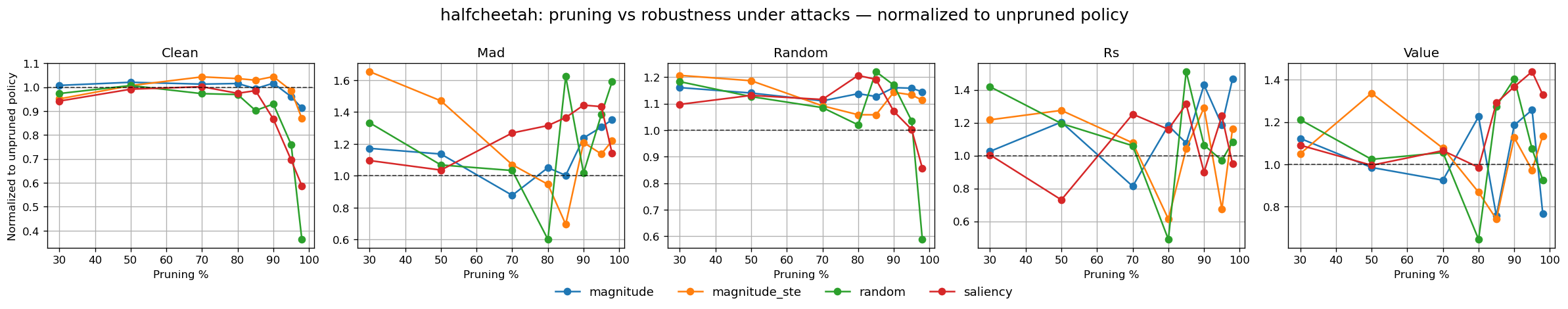}
    \caption{\textbf{\texttt{halfcheetah}: Robustness under different adversaries.} 
    Pruning vs.\ robustness curves across attack types (Clean, MAD, Random, RS, Value).}
    \label{fig:hc_attacks}
\end{figure}

\begin{figure}[H]
    \centering
    \includegraphics[width=\linewidth]{imgs/hopper_t.png}
    \caption{\textbf{\texttt{hopper}: Clean vs.\ Robust frontier.} 
    Normalized clean and robust returns as pruning increases, across pruning strategies.}
    \label{fig:hopper_t_app}
\end{figure}

\begin{figure}[H]
    \centering
    \includegraphics[width=\linewidth]{imgs/hopper.png}
    \caption{\textbf{\texttt{hopper}: Robustness under different adversaries.} 
    Pruning vs.\ robustness curves across attack types (Clean, MAD, Random, RS, Value).}
    \label{fig:hopper_attacks_app}
\end{figure}

\begin{figure}[H]
    \centering
    \includegraphics[width=\linewidth]{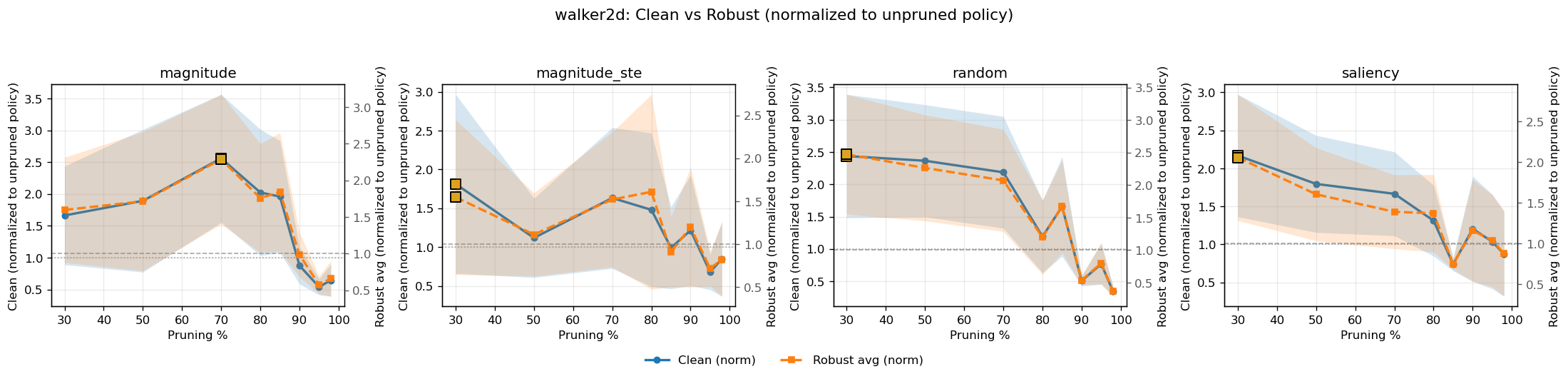}
    \caption{\textbf{\texttt{walker2d}: Clean vs.\ Robust frontier.} 
    Normalized clean and robust returns as pruning increases, across pruning strategies.}
    \label{fig:walker_t}
\end{figure}

\begin{figure}[H]
    \centering
    \includegraphics[width=\linewidth]{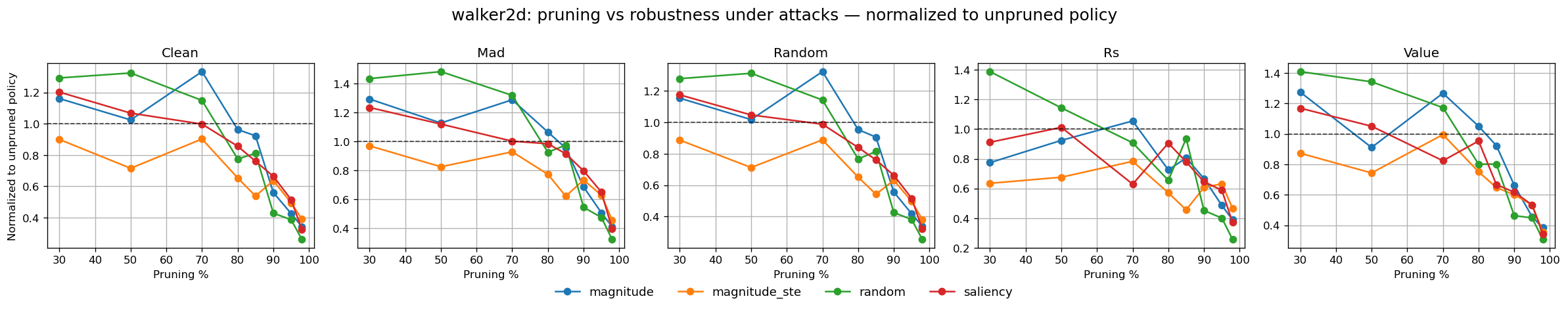}
    \caption{\textbf{\texttt{walker2d}: Robustness under different adversaries.} 
    Pruning vs.\ robustness curves across attack types (Clean, MAD, Random, RS, Value).}
    \label{fig:walker_attacks}
\end{figure}

\begin{table}[t]
    \centering
    \footnotesize
    \begin{tabular}{l l r r r r}
\toprule
Environment & Method & Sweet-Spot \%& Clean &  Robust (Worst) & Norm. Robust (Avg) \\
\midrule
 \texttt{hopper} &      magnitude &                30\% &        1.00 &                 1.63 &               1.26 \\
     &  magnitude\_ste &                90\% &        0.95 &                 1.53 &               1.22 \\
     &         random &                90\% &        1.00 &                 1.51 &               1.33 \\
     &       saliency &                85\% &        1.00 &                 1.66 &               1.28 \\
   \texttt{walker2d} &      magnitude &                70\% &        1.33 &                 1.09 &               1.24 \\
    &  magnitude\_ste &                70\% &        0.90 &                 0.81 &               0.90 \\
    &         random &                30\% &        1.29 &                 1.43 &               1.37 \\
    &       saliency &                30\% &        1.20 &                 0.94 &               1.13 \\
 \texttt{halfcheetah} &      magnitude &                90\% &        1.02 &                 1.24 &               1.23 \\
     &  magnitude\_ste &                50\% &        1.01 &                 1.47 &               1.28 \\
     &         random &                85\% &        0.90 &                 1.62 &               1.35 \\
     &       saliency &                85\% &        0.99 &                 1.37 &               1.26 \\
\bottomrule
\end{tabular}
    \caption{\textbf{Sweet-spot pruning levels.} 
    Across environments, pruning uncovers reproducible sparsity ranges (30--70\%) where robustness gains dominate without harming clean returns. Returns are normalized to the unpruned policy performance.}
    \label{tab:sweetspots}
\end{table}

\begin{table}[H]
\centering
\begin{tabular}{l l r r r}
\toprule
Environment & Method & Pruning \% & Avg. worst-seed abs & Avg. worst-seed norm \\
\midrule
halfcheetah & magnitude & 98\% & 1871.41 & 1.00 \\
 & magnitude\_ste & 30\% & 1908.56 & 1.00 \\
 & random & 85\% & 1844.73 & 0.91 \\
 & saliency & 80\% & 2093.46 & 1.11 \\
hopper & magnitude & 30\% & 1348.83 & 0.92 \\
 & magnitude\_ste & 90\% & 1227.63 & 0.83 \\
 & random & 30\% & 1528.69 & 1.06 \\
 & saliency & 85\% & 1425.76 & 0.96 \\
walker2d & magnitude & 30\% & 1123.91 & 0.53 \\
 & magnitude\_ste & 70\% & 446.48 & 0.22 \\
 & random & 30\% & 2259.16 & 1.11 \\
 & saliency & 30\% & 1403.68 & 0.68 \\
\bottomrule
\end{tabular}
\caption{For each environment and pruning method (SA=on), we summarize robustness by averaging, over the non-clean attacks, the \emph{worst-seed} mean reward \emph{evaluated at the attack-specific sweet spot}, where the sweet spot is defined as the sparsity that maximizes the \emph{average} reward across seeds within the method. We report a single representative pruning percentage per method as the \emph{mode} of the attack-wise sweet spots (ties favor smaller \%). Absolute scores and values normalized to the no-prune (0.30) baseline for each attack are shown.}
\label{tab:avg_worst_seed_per_method_with_percent}
\end{table}

\section{Experimental details and configuration}
\label{app:config}
\textbf{Hyperparameters.} The hyperparameters for PPO are presented in Tables \ref{tab:adv-hparams}

\begin{table}[H]
\centering
\caption{Key PPO and attack-specific hyperparameters for adversarial training in  \texttt{MuJoCo}.}
\label{tab:adv-hparams}
\begin{tabular}{lc}
\toprule
\textbf{Hyperparameter}         & \textbf{Value} \\
\midrule
Total timesteps                 & 50M \\
Learning rate                   & 3e-4 \\
Batch size (envs × steps)       & 2048 × 10 \\
Update epochs                   & 4 \\
Minibatches per update          & 32 \\
$\gamma$ (discount factor)      & 0.99 \\
GAE $\lambda$                   & 0.95 \\
Clipping $\epsilon$             & 0.2 \\
Entropy coefficient             & 0.01 \\
Value function coefficient      & 0.5 \\
Max gradient norm               & 1.0 \\
Adversary hidden size           & 256 \\
Similarity penalty $\lambda_\text{attack}$ & 10 \\
SA Kappa \(\kappa\) & Chosen from \(\kappa \in \{0.3, 0.5, 0.7 \} \) \\
\bottomrule
\end{tabular}
\end{table}

\textbf{Experimental compute resources}\\

All experiments were run as single node jobs across 2 clusters, each comprised of 4 NVIDIA RTX A6000 GPUs - a total of 8 GPUs, each with 48 GB of VRAM.

Upper bounds for compute time are listed below:
\begin{itemize}
    \item Training victim policies - 109 GPU hours
    \item Training adversarial policies - 10 GPU hours
    \item Evaluating adversarial policies against victims - 72 GPU hours
\end{itemize}
An upper bound for total compute time is $109+10+72=191$ GPU hours, or approximately $8$ GPU days.

\noindent\textbf{Policy network architectures.}
We use a unified actor--critic architecture across all \texttt{MuJoCo} tasks. The network is implemented in JAX/Flax and consists of two parallel branches for the actor and critic, sharing the same design principle. Each branch is a two-layer MLP with hidden size 256 and either \texttt{tanh} or \texttt{ReLU} activations (selectable at runtime). Weights are initialized with orthogonal initialization (scaled appropriately), and biases are set to zero.

The actor branch outputs the mean of a diagonal Gaussian distribution over the action space, with fixed covariance $\sigma^2 I$ where $\sigma=0.1$. Together, these define a Multivariate Normal policy distribution. The critic branch outputs a scalar state-value estimate through its own two-layer MLP with the same hidden size and activation.  

This design provides a balanced architecture: compact enough for stable training with pruning, yet expressive enough to capture the dynamics of continuous-control benchmarks.

\section{Adversarial perturbation visualisations}
\label{app:perturbations}

This appendix visualizes the benign observations alongside their corresponding adversarial perturbations for three environments: \texttt{Craftax}, \texttt{HalfCheetah}, and \texttt{Hopper}.



\begin{figure}[H]
    \centering
    \begin{tabular}{cc}
        \toprule
        \textbf{Natural Observation} & \textbf{Adversarial Perturbation} \\
        \midrule
        \includegraphics[width=0.28\textwidth]{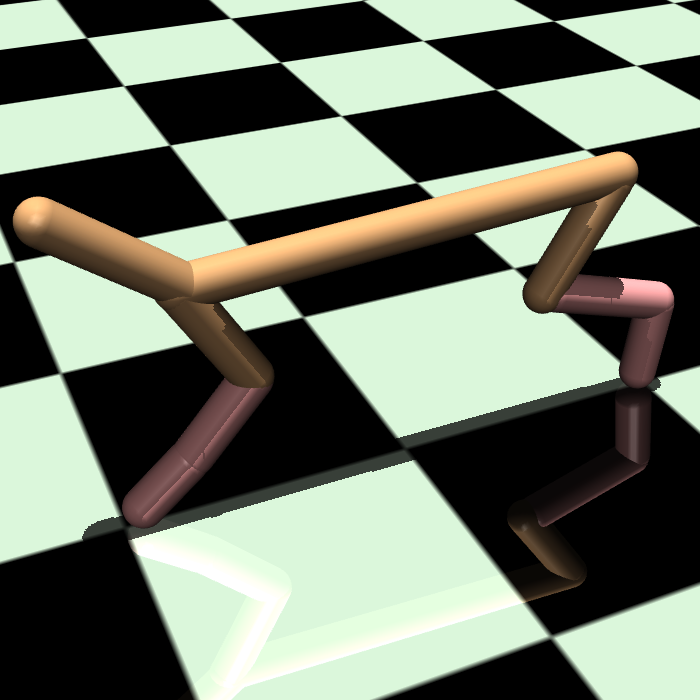} & 
        \includegraphics[width=0.28\textwidth]{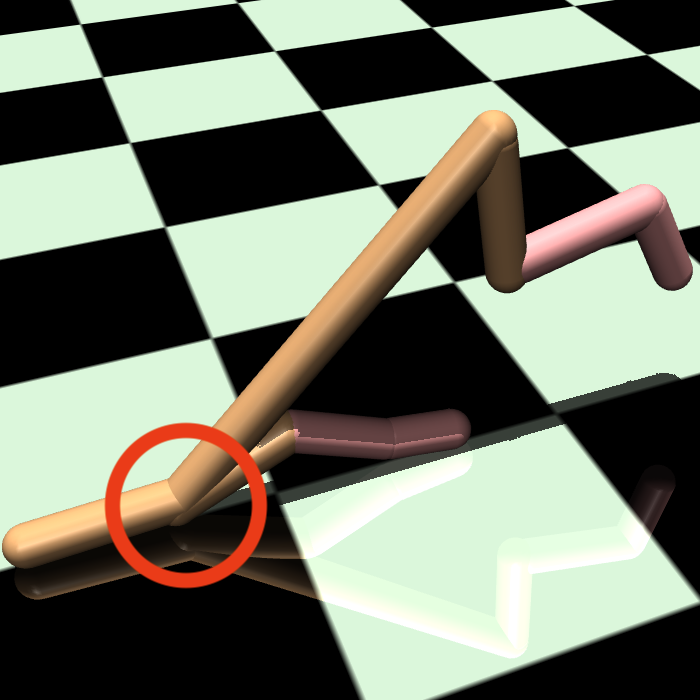} \\
        \includegraphics[width=0.28\textwidth]{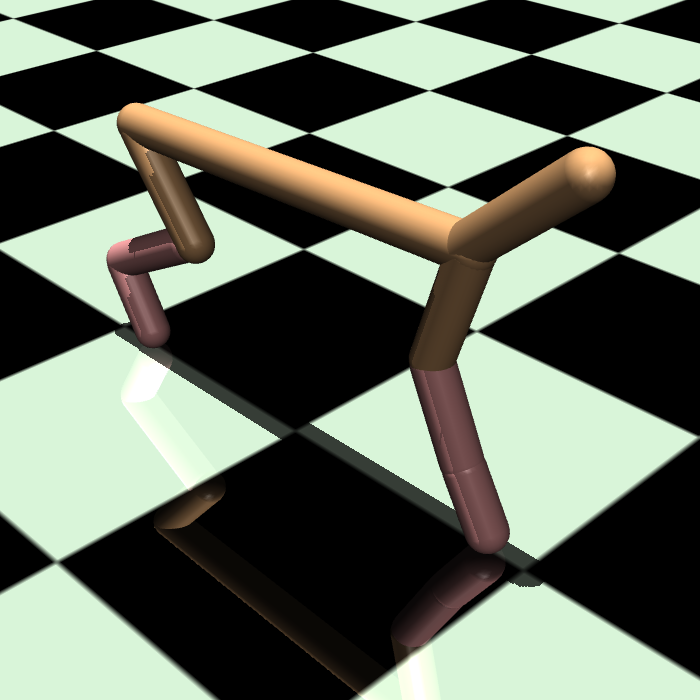} & 
        \includegraphics[width=0.28\textwidth]{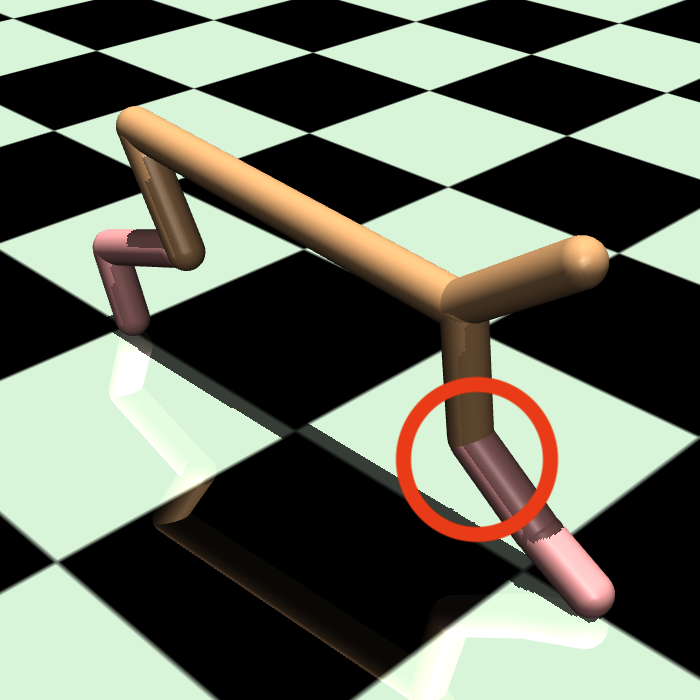} \\
        \bottomrule
    \end{tabular}
    \caption{Comparison of benign observations and their corresponding adversarial perturbations in the \texttt{HalfCheetah} environment. The first row is a particularly severe perturbation, the kind our adversarial framework is \textbf{disincentivized} from producing.}
    \label{fig:halfcheetah_comparison}
\end{figure}

\begin{figure}[H]
    \centering
    \begin{tabular}{cc}
        \toprule
        \textbf{Natural Observation} & \textbf{Adversarial Perturbation} \\
        \midrule
        \includegraphics[width=0.28\textwidth]{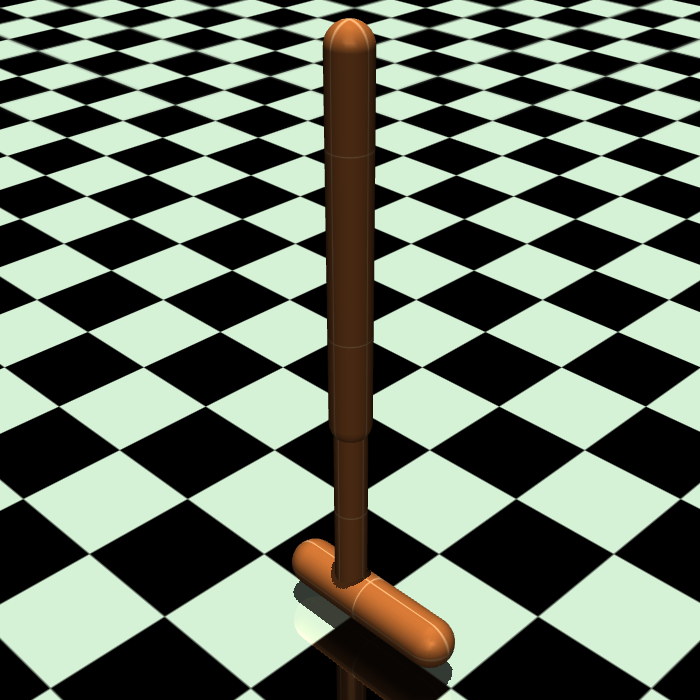} & 
        \includegraphics[width=0.28\textwidth]{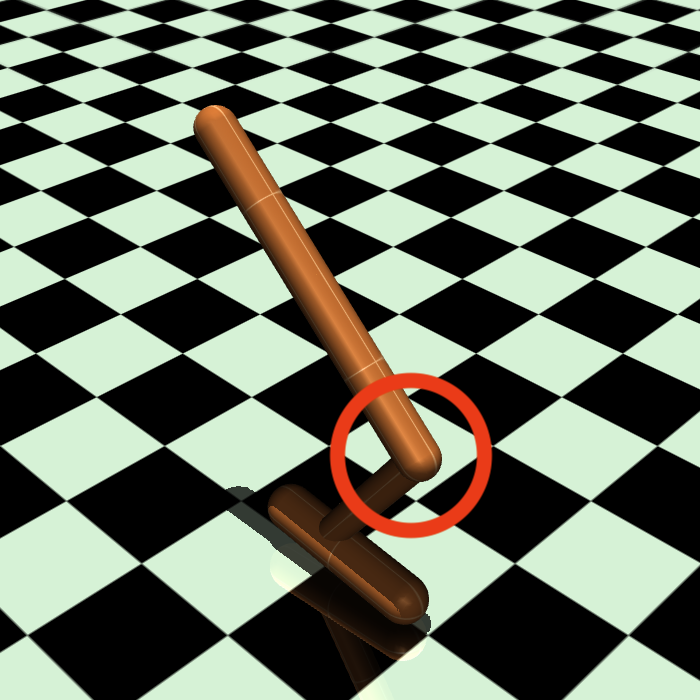} \\
        \includegraphics[width=0.28\textwidth]{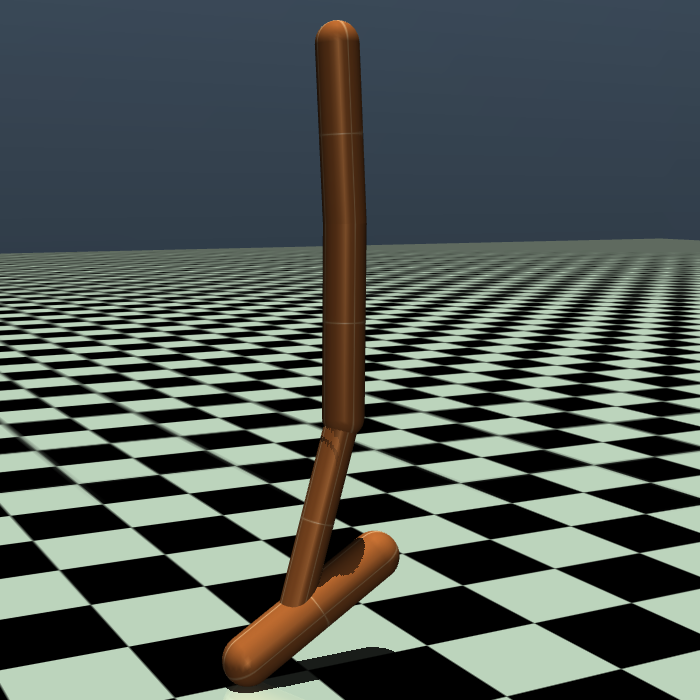} & 
        \includegraphics[width=0.28\textwidth]{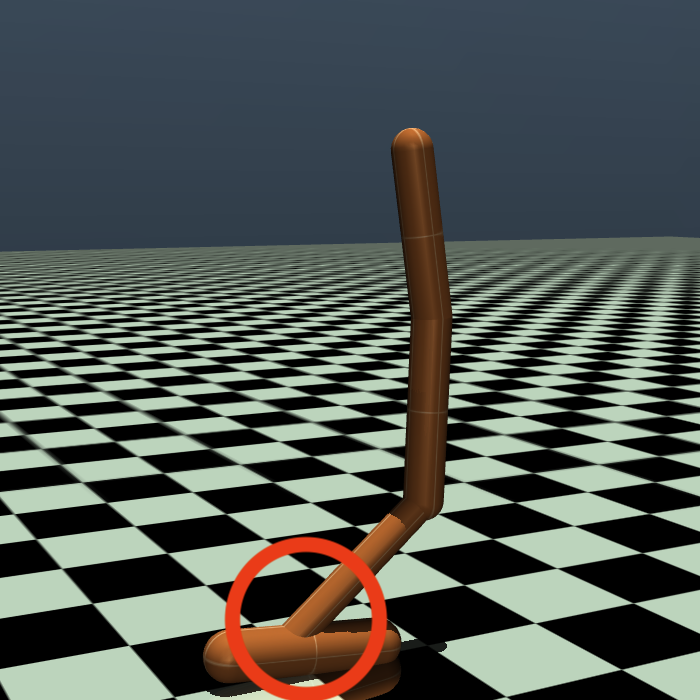} \\
        \bottomrule
    \end{tabular}
    \caption{Comparison of benign observations and their corresponding adversarial perturbations in the \texttt{Hopper} environment.}
    \label{fig:hopper_comparison}
\end{figure}


\section{Reproducibility Statement.} 
We have taken several steps to ensure the reproducibility of our results. 
Theoretical contributions, including proofs of all main theorems and supporting lemmas, are provided in Appendix~\ref{app:proofs}. 
Full algorithmic details, including pseudocode for PPO with pruning and optional SA regularization, are given in Appendix~\ref{app:algo}. 
Experimental settings, including hyperparameters, network architectures, compute resources, and pruning schedules, are described in Appendix~\ref{app:config}. 
Additional empirical results, including robustness--performance trade-offs across environments, per-seed variability, and micro-pruning ablations, are presented in Appendix~\ref{app:figures}. 
Visualizations of adversarial perturbations are included in Appendix~\ref{app:perturbations}. 

\end{document}